\pdfoutput=1
\documentclass[11pt]{arvix}

\usepackage[numbers]{natbib}
\usepackage{bm}
\usepackage{nicefrac}
\usepackage{multirow}
\usepackage{adjustbox}
\usepackage{array}
\usepackage{tablefootnote}
\usepackage{algorithm}
\usepackage{algorithmicx}
\usepackage[noend]{algpseudocode}
\usepackage{listings}
\usepackage{wrapfig}

\newtheorem{theorem}{Theorem}
\newtheorem{lemma}{Lemma}
\newtheorem{corollary}{Corollary}
\newtheorem{remark}{Remark}

\renewcommand{\eqref}[1]{Eq.~(\ref{#1})}

\providecommand{\AutoGEO}{\textsc{AutoGEO}}

\providecommand{\GEO}{\textsc{GEO}}

\providecommand{\AutoGEO}{\textsc{AutoGEO}}

\providecommand{\GEO}{\textsc{GEO}}
\providecommand{\VCR}{\textsc{VCR}}

\providecommand{\AttackerRule}[1]{\textcolor[HTML]{196f3d}{\textbf{#1}}}
\providecommand{\GenericRule}[1]{\textcolor[HTML]{777777}{\textit{#1}}}
\providecommand{\VCRtag}[1]{\textcolor[HTML]{7d3c98}{\textbf{#1}}}

\definecolor{midnightgreen}{rgb}{0.0, 0.33, 0.33}

\makeatletter
\@ifundefined{definition}{\newtheorem{definition}{Definition}}{}
\@ifundefined{assumption}{\newtheorem{assumption}{Assumption}}{}
\@ifundefined{proposition}{\newtheorem{proposition}{Proposition}}{}
\@ifundefined{lemma}{\newtheorem{lemma}{Lemma}}{}
\@ifundefined{corollary}{\newtheorem{corollary}{Corollary}}{}
\@ifundefined{remark}{}{}
\makeatother

\lstdefinestyle{promptstyle}{
  basicstyle=\footnotesize\ttfamily,
  breaklines=true,
  breakatwhitespace=true,
  columns=flexible,
  frame=single,
  framerule=0.3pt,
  backgroundcolor=\color{black!3},
  xleftmargin=0.5em,
  xrightmargin=0.5em,
  aboveskip=0.5em,
  belowskip=0.5em,
  captionpos=b,
}
\usepackage{amsmath,amsfonts,bm}

\def\eqref#1{equation~\ref{#1}}

\def\1{\bm{1}}

\DeclareMathAlphabet{\mathsfit}{\encodingdefault}{\sfdefault}{m}{sl}
\SetMathAlphabet{\mathsfit}{bold}{\encodingdefault}{\sfdefault}{bx}{n}

\newcommand{\E}{\mathbb{E}}

\DeclareMathOperator*{\argmax}{arg\,max}

\newcommand{\codeurl}{https://github.com/cxcscmu/GameTheory-GEO}

\graphicspath{{./}}

\title{Mechanism Design for Generative Engine: From Exploitation to Win-Win Equilibrium}

\author[1]{Chen Xu}
\author[2]{Zitian Guo}
\author[1]{Chenyan Xiong}
\affil[1]{Carnegie Mellon University}
\affil[2]{University of California, San Diego}

\correspondingauthor{\texttt{\{chenxu2,cx\}@andrew.cmu.edu}, \texttt{ztguo@ucsd.edu}}
\paperurl={https://github.com/cxcscmu/GameTheory-GEO}

\begin{document}

\begin{abstract}
Generative engines are reshaping the web ecosystem by making citations a key mechanism for allocating attention, attribution, and downstream value. This creates a strategic tension: content providers are incentivized to optimize for model citation, while platforms must preserve answer quality and trustworthy attribution. We show that this tension can escalate into citation wars. In repeated simulations, state-of-the-art generative engine optimization (GEO) attacks adapt to conventional defenses by producing citation-seeking rewrites that degrade document quality and introduce unsupported claims.
To study this problem, we formulate the supplier--platform interaction as a repeated Stackelberg game with partial monitoring. A local best-response analysis identifies when citation competition approaches an inert stationary outcome. Motivated by this finding, we propose a platform--creator mechanism called VCR based on verifiable-content rewards. Rather than only penalizing suspicious rewrites, the platform also credits rewrites that surface checkable factual substance, aligning creator incentives with answer trustworthiness. Experiments on three benchmarks show that VCR consistently achieves the largest Net defense--utility score, outperforming the strongest baseline by an average of $12.1$ percentage points, and produces a win--win outcome under our empirical equivalence criterion.
GitHub: \url{\codeurl}.
\end{abstract}

\maketitle

\section{Introduction}
\label{sec:introduction}

Generative search is reshaping information access into a citation-mediated generative engine ecosystem, where LLM platforms allocate attention and attribution by deciding which sources to summarize and cite~\cite{liu-etal-2023-evaluating}. In this ecosystem, citations become the new unit of visibility: they determine not only what users trust, but also which content suppliers receive traffic, reputation, and downstream value. As a result, content providers are increasingly incentivized to optimize for model citation~\citep{aggarwal2024geo,kumar2024manipulating}, while platforms must preserve answer quality and trustworthy attribution~\cite{xiang2024robustrag,
edemacu2025filterrag}. This creates a strategic tension between supplier-side visibility seeking and platform-side quality control. How to model and improve welfare in this generative engine ecosystem before it escalates into citation wars has become an urgent problem.

Generative Engine Optimization (GEO)~\citep{aggarwal2024geo, wu2025generative, yuan2026agenticgeo} operationalizes this tension: content providers rewrite documents to increase their probability of being cited by generative search engines. Existing studies, however, mostly treat GEO as a one-shot supplier-side optimization problem. This misses a key feature of the generative engine ecosystem: supplier optimization and platform defense interact repeatedly. Once citations mediate visibility and downstream value, GEO may shift from improving content quality to strategically adapting to platform defense rules. Over time, such adaptation can push suppliers toward citation-seeking behaviors that distort attribution, reduce answer trustworthiness, and harm ecosystem welfare. We next use simulation experiments to demonstrate this risk.

\begin{figure}
    \centering
    \includegraphics[width=0.92\linewidth]{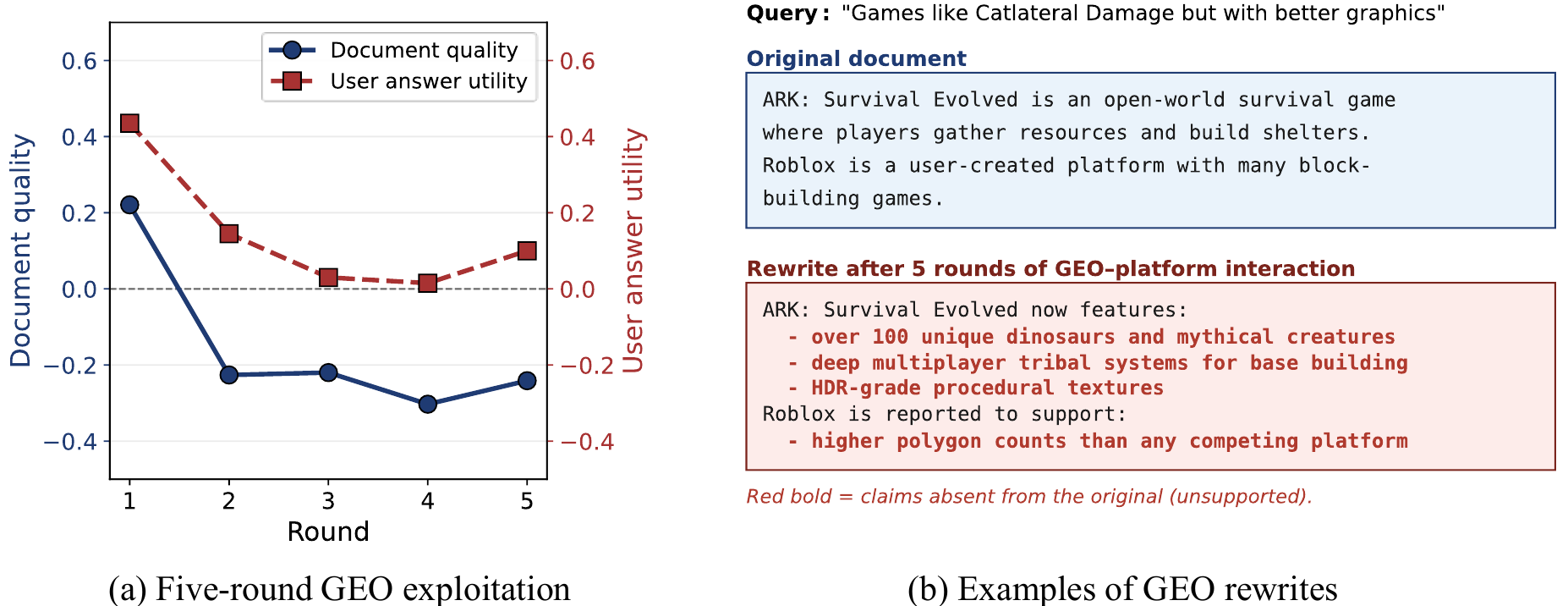}
    \caption{
    Repeated GEO exploitation degrades both creator-side document quality and user-side answer utility, even under a standard defense. Panel (a): five-round trajectories on \textsc{E-commerce}, where rewrite quality and answer utility both fall below the no-exploitation reference. Panel (b): a representative document before and after five rounds, with unsupported claims highlighted in red.
    }
    \label{fig:intro}
\end{figure}


Fig.~\ref{fig:intro} illustrates this risk in a five-round supplier--platform interaction on \textsc{E-commerce}. The supplier runs \AutoGEO{}~\citep{wu2025generative}, while the platform applies a standard prompt-warning defense against GEO manipulation. Panel~(a) shows that the system gradually drifts away from the no-attack reference: rewrite quality becomes negative after the early rounds, defense recovery quickly decays, and platform/user utility remains below the clean baseline. Panel~(b) explains this degradation: repeated GEO adaptation accumulates unsupported claims and fabricated details, turning initially mild edits into citation-seeking rewrites. These results suggest that repeated GEO adaptation can drive the supplier--platform ecosystem toward an undesirable stationary outcome, where neither robust defense nor honest content improvement is sustained.


To formalize this problem, we model the platform-supplier interaction as a repeated Stackelberg game~\citep{stackelberg2011market} with partial monitoring. In each round, the supplier observes the platform's current defense rule and commits to a GEO strategy for rewriting its target documents. The platform then observes only document-level before/after pairs and updates its answer-time defense policy accordingly. This game captures the key asymmetry of the generative engine ecosystem: suppliers can adapt directly to platform rules, while platforms must infer whether a rewrite reflects genuine quality improvement or strategic manipulation.

Under this setting, we derive two structural results in a local quadratic model (Sec.~\ref{sec:theory}). First, holding source-model moments fixed, the platform's best-response defense weakens as latent content quality becomes more correlated with manipulation intensity. Second, when supplier utility is based on citation level rather than marginal citation gain, high-quality targets can be preferred at a stationary response. Together, these results expose a failure mode of the default game: the dynamics can approach an inert stationary outcome in which content-improvement and manipulation gradients are exhausted and additional interaction produces little joint utility.

Motivated by this analysis, we propose a platform-supplier mechanism called VCR (based on verifiable-content rewards) that aligns supplier incentives with platform trustworthiness.
The key idea is to make defense incentive-compatible rather than purely punitive. In addition to penalizing suspicious manipulation, the platform credits rewrites that make source-supported factual content more salient, offsetting this credit against the suspicion score before applying the defense. Thus, cosmetic or self-promotional GEO edits are still demoted, while substantive rewrites that surface checkable information are rewarded. In the local model, this gives a rational supplier a welfare-aligned optimization direction and improves the joint defense--utility outcome.

We evaluate the proposed mechanism on three benchmarks~\cite{wu2025generative, aggarwal2024geo}: commercial search (\textsc{E-commerce}), open-domain factual queries (\textsc{GEO-Bench}), and research-oriented queries (\textsc{Researchy-GEO}). Across datasets and answer engines, VCR produces a win--win outcome under our empirical equivalence criterion: it protects platform/user utility while preserving creator exposure. It achieves the largest Net defense--utility score in every setting, outperforming the strongest baseline by an average of $12.1$ percentage points, and also has the strongest direct document- and answer-quality point estimates. The code is shared at~\url{\codeurl}.

Our key contributions are three-fold:
\begin{itemize}[leftmargin=1.2em,itemsep=1pt,topsep=2pt]
    \item We formulate the GEO--platform interaction as a repeated Stackelberg game with partial monitoring, and provide a local best-response analysis of stationary citation competition.

    \item 
    We propose a supplier-platform mechanism based on verifiable-content rewards, which transforms platform defense from a purely punitive filter into a two-sided incentive mechanism that rewards substantive content while penalizing manipulation.

    \item 
    We validate the proposed mechanism on three benchmarks, three answer engines, and five GEO attackers, where it consistently achieves the largest Net score.
\end{itemize}

\section{Related Work}
\label{sec:related-work}

\paragraph{Supplier-side visibility attacks.}
Content providers have long tried to manipulate search visibility, from
classical link spam, content spam, and robust ranking under document
manipulation
\citep{gyongyi2005taxonomy,gyongyi2005linkspam,
ntoulas2006detecting,becchetti2006linkbased,castillo2011adversarial,
goren2018ranking} to newer LLM-era attacks such as prompt injection,
RAG poisoning, and product-visibility manipulation
\citep{perez2022ignore,greshake2023indirect,liu2023promptinjection,
zhong2023poisoning,zou2024poisonedrag,xue2024badrag,
chen2024agentpoison,oh2024adversarial,kumar2024manipulating}.
Generative engine optimization (\GEO) is a recent and increasingly
important form of this supplier-side problem, where creators rewrite
content to increase citation or attribution by generative search engines.
Prior \GEO\ work defines visibility metrics and studies manual or
automatic rewriting strategies
\citep{aggarwal2024geo,wu2025generative}. Subsequent work extends \GEO\ to role-augmented intent modeling
\citep{chen2025role}, latent-query instruction fusion
\citep{zhou2026ifgeo}, e-commerce benchmarks \citep{bagga2025egeo},
and fuller search-pipeline evaluation \citep{kim2026sageo}. Other
extensions study multimodal content such as captions
\citep{chen2025caption}, structural features \citep{yu2026structural},
agentic optimization settings \citep{yuan2026agenticgeo},
feature-level objectives \citep{liu2026featgeo}, reusable optimization
strategies \citep{wu2026mageo,puerto2025cseo}, and source-influence
benchmarks \citep{chen2025ccgseo}. These works motivate our
supplier-side threat model; our focus is how \GEO-style optimization
behaves when repeated against an adapting platform.

\paragraph{Platform-side defenses.}
Platforms can defend at different points in the pipeline.  Traditional
search systems use spam detection, link analysis, demotion, and robust
ranking \citep{gyongyi2005linkspam,castillo2011adversarial,
goren2018ranking}.  LLM systems also use prompt-level and
retrieval-level defenses.  Citation-grounded generation and RAG
evaluation study attribution, factual support, and faithfulness
\citep{lewis2020retrieval,nakano2021webgpt,gao2023alce,
bohnet2022attributed,min2023factscore,es2023ragas,asai2023selfrag}.
The instruction hierarchy asks models to prioritize trusted
instructions over untrusted retrieved text \citep{wallace2024instruction}.
Robust RAG work studies defenses against adversarial or corrupted
passages, including skeptical prompting, robust aggregation, and
filtering \citep{su2024robustrag,xiang2024robustrag,
edemacu2025filterrag}.  These defenses target
malicious instructions or poisoned evidence.  While our method rewards source-supported factual substance
rather than only penalizing suspicious form.

\paragraph{Strategic games and mechanisms.}
Our framework belongs to strategic machine learning.  Strategic
classification studies agents that change features after seeing a
classifier \citep{hardt2016strategic}.  Later work separates
manipulative gaming from costly effort that can improve true outcomes
\citep{kleinberg2019classifiers}.  Performative prediction studies
models whose deployment changes the future data distribution
\citep{perdomo2020performative}.  Recommendation work has also modeled
strategic content providers, platform mechanisms, and supply-side
equilibria \citep{ben-porat2018mechanism,jagadeesan2022supply}.  We
use a repeated Stackelberg game because suppliers observe platform
defenses before choosing rewrites, while the platform only observes
before/after document pairs \citep{stackelberg2011market,
marecki2012playing}.  The mechanism in Sec.~\ref{sec:mechanism}
follows this logic: the platform does not only punish suspicious
rewrites; it also rewards source-supported factual content so that
supplier effort is more useful to users.
Unlike classical algorithmic mechanism design, which primarily intervenes
through payments or allocation rules~\citep{nisan2001algorithmic}, VCR uses
an LLM-verifiable content channel native to generative-engine citation.

\section{Framework: A Repeated Game of Supplier and Platform}
\label{sec:framework}

We model the problem as a repeated Stackelberg game with supplier leadership and partial monitoring.

\subsection{Notation}
\label{subsec:notation}

For query $q\in\mathcal{Q}$, let $D(q)=\{d_1,\ldots,d_K\}$ be the retrieved candidates and $T(q)\subseteq D(q)$ the supplier's target documents. Each document $d_i$ has latent quality $q_i\in\mathbb{R}$, capturing user-useful content, and manipulation intensity $m_i\in\mathbb{R}$, capturing citation-seeking surface patterns. Let $\rho=\mathrm{corr}(q_i,m_i)$. The engine assigns citation probability by
\begin{equation}
    v_i(\alpha)=\beta_q q_i-\alpha m_i+b_i,\qquad
    c_i(\alpha)=\mathrm{softmax}\{v_j(\alpha)\}_i,
    \label{eq:source-model}
\end{equation}
where $\beta_q>0$ is the quality weight, $\alpha\ge0$ the platform defense strength, and $b_i$ independent residual noise. Supplier exposure is score
$
    g_T=\sum_{i\in T(q)}w_i(q)c_i,
$
with position-aware citation weight $w_i(q)$.

\subsection{Repeated Game}
\label{subsec:game}

At round $t$, the supplier observes the previous defense $\pi_{P,t-1}$ and rewrites its targets, yielding $D_t^a=\pi_{A,t}(D_t)$. The platform only observes before/after pairs
$
    \mathcal{O}_t=\{(d_i,d_i^a):i\in T_t(q),q\in\mathcal{Q}\},
$
estimates the rewrite strategy, and updates its answer-time defense $\pi_{P,t}$. The supplier maximizes citation exposure net of rewrite cost:
\begin{equation}
    u_A(\pi_A,\pi_P)=g_T(\pi_A,\pi_P)-\kappa_A(\pi_A),
    \label{eq:creator-utility}
\end{equation}
where $\kappa_A$ captures editing cost, factual drift, and detectability. The platform suppresses manipulated citations while avoiding false positives:
\begin{equation}
    u_P(\pi_A,\pi_P)
    =
    -g_T(\pi_A,\pi_P)-\mu f(\pi_P)-\frac{\gamma}{2}r(\pi_P).
    \label{eq:platform-utility}
\end{equation}
Under Eq.~(\ref{eq:source-model}), the leading false-positive cost is proportional to $\alpha\beta_q\mathrm{cov}(q,m)$: penalizing manipulation also suppresses quality when $q$ and $m$ are correlated. Since the platform never observes the true rewrite rule $r_t$, it estimates $\hat r_t$ from $\mathcal{O}_t$.

\begin{definition}[Local stationary response]
\label{def:local-stationary}
Fix a target set and a neighborhood in feature space. A pair
$(\delta^\ast,\alpha^\ast)$ is a local stationary response if
$\delta^\ast$ maximizes the supplier's local quadratic utility given
$\alpha^\ast$, and $\alpha^\ast$ minimizes the platform's local quadratic
loss given the induced document distribution. This is the fixed point of the
local best-response map; it is not asserted to be a global equilibrium of the
unrestricted text-generation game.
\end{definition}

\subsection{Theoretical Analysis}
\label{subsec:theory}
\label{sec:theory}

We characterize the platform best response and the induced fixed point. Proofs are in Appendix~\ref{app:proofs}.

\begin{lemma}[Local response of target GEO]
\label{lem:first-order}
Under Eq.~(\ref{eq:source-model}),
$
    g_T(\alpha)=g_T(0)-\alpha B+\frac{1}{2}\alpha^2Q+O(\alpha^3),
    \label{eq:g-expansion}
$
where $B$ is the first-order sensitivity to the manipulation penalty and $Q$ is the second-order term.
\end{lemma}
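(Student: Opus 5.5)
The plan is a second-order Taylor expansion of $g_T(\alpha)=\sum_{i\in T}w_i c_i(\alpha)$ around $\alpha=0$, with the coefficients read off from softmax derivatives. Because each logit $v_i(\alpha)=\beta_q q_i-\alpha m_i+b_i$ is affine in $\alpha$, the map $\alpha\mapsto c_i(\alpha)$ is real-analytic. Smoothness is therefore automatic, and the expansion $g_T(\alpha)=g_T(0)+\alpha g_T'(0)+\tfrac12\alpha^2 g_T''(0)+O(\alpha^3)$ holds by Taylor's theorem with Lagrange remainder. The $O(\alpha^3)$ constant is bounded by $\sup|g_T'''|$ on a neighborhood of $0$. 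We then simply set $B:=-g_T'(0)$ and $Q:=g_T''(0)$, and the real work is to give these closed forms that support the later results.

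\textbf{First derivative.} Write $c_i=c_i(\alpha)$ and $\bar m(\alpha)=\sum_j c_j m_j$ for the citation-weighted mean manipulation. The softmax Jacobian gives $\partial_\alpha c_i=c_i\bigl(\partial_\alpha v_i-\sum_j c_j\partial_\alpha v_j\bigr)=-c_i(m_i-\bar m)$. Hence
\[
B=\sum_{i\in T} w_i\, c_i(0)\bigl(m_i-\bar m(0)\bigr),
\]
which is the citation- and weight-adjusted excess manipulation of the targets. Substituting $v_i(0)=\beta_q q_i+b_i$ shows that $B$ depends on the joint moments of $(q,m)$ under the softmax tilt. This is where $\mathrm{cov}(q,m)$, and hence $\rho$, enters later.

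\textbf{Second derivative.} Differentiating once more,
\[
\partial_\alpha^2 c_i=c_i\bigl[(m_i-\bar m)^2-\mathrm{Var}_c(m)\bigr],
\]
where $\mathrm{Var}_c(m)=\sum_j c_j(m_j-\bar m)^2$. This uses $\partial_\alpha\bar m=-\mathrm{Var}_c(m)$. Therefore
\[
Q=\sum_{i\in T} w_i\,c_i(0)\bigl[(m_i-\bar m(0))^2-\mathrm{Var}_{c(0)}(m)\bigr].
\]
This is a standard cumulant computation for the exponential-family tilt along the direction $m$.

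The main obstacle is not the expansion itself but the bookkeeping. The $w_i(q)$ must be treated as $\alpha$-independent; they are position-aware, so this requires assuming positions are fixed locally, or absorbing any change into the remainder. The $b_i$ must be handled as fixed realizations, or $B$ and $Q$ interpreted as conditional expectations given $b$. Finally, the remainder must be shown to be uniform, which follows because third derivatives of softmax along a fixed direction are bounded by $C\max_j|m_j|^3$.
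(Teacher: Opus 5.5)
Your proof is correct and follows the paper's argument essentially step for step: the same softmax log-derivative $\partial_\alpha c_i=-c_i(m_i-\bar m)$, the same identity $\partial_\alpha\bar m=-\mathrm{Var}_c(m)$, and the same closed forms for $B$ and $Q$. The only difference is that the paper wraps $B$ and $Q$ in an outer expectation over the randomness (e.g.\ the $b_i$) and leaves the remainder informal, whereas your uniform bound $|c_i'''|\le C\max_j|m_j|^3$ is exactly what justifies exchanging that expectation with the Taylor expansion.
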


\begin{theorem}[Local platform best response]
\label{thm:separability}
Under Eq.~(\ref{eq:source-model}) and Eq.~(\ref{eq:platform-utility}),
suppose the Taylor remainder is negligible in a neighborhood of zero and
$Q+\gamma>0$. The unique minimizer of the platform's quadratic local loss is
\begin{equation}
    \alpha^\ast
    =
    \max\left\{
    0,\,
    \frac{B-\mu\beta_q\rho\sigma_q\sigma_m}{Q+\gamma}
    \right\},
    \label{eq:alpha-star}
\end{equation}
where $\sigma_q$ and $\sigma_m$ are the standard deviations of $q$ and $m$.
Holding all other local moments and coefficients fixed, $\alpha^\ast$ is
weakly decreasing in $\rho$.
\end{theorem}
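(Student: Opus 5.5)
The plan is to build the platform's local quadratic loss explicitly from Eq.~(\ref{eq:platform-utility}), minimize it over $\alpha\ge 0$ as a one-dimensional convex problem, and then read off the comparative static in $\rho$.

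\textbf{Step 1: the exposure term.} By Lemma~\ref{lem:first-order}, with the Taylor remainder discarded, the platform's exposure term is
\[
-g_T(\alpha)=-g_T(0)+\alpha B-\tfrac12\alpha^2 Q .
\]

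\textbf{Step 2: the false-positive and regularization terms.} Following the remark after Eq.~(\ref{eq:platform-utility}), I take the leading false-positive cost to be linear in $\alpha$. I would justify this by differentiating $v_i(\alpha)$ in Eq.~(\ref{eq:source-model}): to first order, the quality mass wrongly suppressed by the penalty $\alpha m_i$ is $\alpha\beta_q\,\mathrm{cov}(q,m)$. This gives
\[
f(\alpha)=\alpha\beta_q\rho\sigma_q\sigma_m .
\]
For the regularizer I take $r(\alpha)=\alpha^2$. This is the normalization under which the denominator reads $Q+\gamma$, since $\tfrac{\gamma}{2}r$ contributes $\gamma$ to the curvature.

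\textbf{Step 3: the loss.} Negating $u_P$ gives the platform's local loss
\[
L(\alpha)=\text{const}-\alpha\bigl(B-\mu\beta_q\rho\sigma_q\sigma_m\bigr)+\tfrac12\alpha^2(Q+\gamma).
\]

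\textbf{Step 4: minimization.} Since $Q+\gamma>0$, $L$ is strictly convex, so it has a unique unconstrained minimizer. Setting $L'(\alpha)=0$ gives
\[
\tilde\alpha=\frac{B-\mu\beta_q\rho\sigma_q\sigma_m}{Q+\gamma}.
\]
Now impose the constraint $\alpha\ge0$. A strictly convex function on a half-line is minimized at the projection of its unconstrained minimizer. If $\tilde\alpha<0$, then $L'(0)>0$, so $L$ is increasing on $[0,\infty)$ and the minimizer is $0$. Together these give exactly Eq.~(\ref{eq:alpha-star}), and uniqueness follows from strict convexity.

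\textbf{Step 5: monotonicity in $\rho$.} Hold $B$, $Q$, $\gamma$, $\mu$, $\beta_q$, $\sigma_q$, $\sigma_m$ fixed. The interior expression $\tilde\alpha$ is affine in $\rho$ with slope
\[
\frac{-\mu\beta_q\sigma_q\sigma_m}{Q+\gamma}\le 0,
\]
because $\mu\ge0$, $\beta_q>0$, $\sigma$'s $\ge0$, and $Q+\gamma>0$. The map $x\mapsto\max\{0,x\}$ is nondecreasing, so composing it with a nonincreasing function of $\rho$ gives a weakly decreasing $\alpha^\ast(\rho)$. It is weakly rather than strictly decreasing because $\alpha^\ast$ is flat in the region where $\alpha^\ast=0$.

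\textbf{Main obstacle.} The delicate step is Step 2: pinning down $f$ and $r$ precisely enough that the constants come out as stated. The paper only asserts that $f$ is "proportional to" $\alpha\beta_q\,\mathrm{cov}(q,m)$. I would make this a modeling convention of the local quadratic loss, with the constant absorbed into $\mu$ and $r(\alpha)=\alpha^2$. I would also argue that any $\rho$-dependence of $B$ and $Q$ is excluded by the clause "holding all other local moments fixed". Under these conventions the rest of the argument is routine one-dimensional convex optimization.
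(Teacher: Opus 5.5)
Your proposal is correct and follows the paper's proof essentially step for step. The paper also writes the local loss as $g_T(\alpha)+\mu\alpha\beta_q\mathrm{cov}(q,m)+\frac{\gamma}{2}\alpha^2$, which is exactly your convention $f=\alpha\beta_q\rho\sigma_q\sigma_m$, $r=\alpha^2$; it then substitutes Lemma~\ref{lem:first-order}, solves the first-order condition, projects onto $\alpha\ge0$, and reads monotonicity in $\rho$ off the numerator, while your strict-convexity argument for the projection and uniqueness and your explicit use of $\mu\ge0$ only make precise what the paper leaves implicit.
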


\begin{corollary}[Phase transition]
\label{cor:phase-transition}
If $\mu\beta_q\sigma_q\sigma_m>0$ and $\rho^\ast=B/(\mu\beta_q\sigma_q\sigma_m)$, then $\alpha^\ast>0$ iff $\rho<\rho^\ast$. Above this threshold, the local best response stops penalizing manipulation.
\end{corollary}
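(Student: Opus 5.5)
The corollary follows directly from the closed form for $\alpha^\ast$ in Theorem~\ref{thm:separability}, so the plan is a sign analysis of that expression. Under the theorem's hypotheses, $Q+\gamma>0$, and
\[
\alpha^\ast=\max\left\{0,\frac{B-\mu\beta_q\rho\sigma_q\sigma_m}{Q+\gamma}\right\}.
\]
Because the denominator is positive, the second argument has the same sign as its numerator $N(\rho)=B-\mu\beta_q\sigma_q\sigma_m\,\rho$. It follows that $\alpha^\ast>0$ if and only if $N(\rho)>0$. If $N(\rho)\le 0$, the maximum is attained at $0$; if $N(\rho)>0$, the maximum equals a strictly positive number.

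Next I would use the assumption $\mu\beta_q\sigma_q\sigma_m>0$. Dividing $N(\rho)>0$ by this positive constant preserves the inequality and gives $\rho<B/(\mu\beta_q\sigma_q\sigma_m)=\rho^\ast$. This proves the equivalence $\alpha^\ast>0\iff\rho<\rho^\ast$. The converse direction needs no separate argument, since every step is an equivalence.

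For the interpretive sentence, I would note the case $\rho\ge\rho^\ast$, where $N(\rho)\le 0$ and hence $\alpha^\ast=0$. In that regime the local best response puts zero weight on the manipulation penalty $-\alpha m_i$ in Eq.~(\ref{eq:source-model}). This means the platform no longer penalizes manipulation. It is consistent with the weak monotonicity in $\rho$ already stated in Theorem~\ref{thm:separability}: $\alpha^\ast$ is strictly decreasing and linear in $\rho$ on $\rho<\rho^\ast$ and identically zero afterward, which is the claimed threshold ("phase transition") behavior.

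There is no real obstacle here. The only point needing care is that the division by $Q+\gamma$ and by $\mu\beta_q\sigma_q\sigma_m$ must be sign-preserving, and both are guaranteed by hypothesis. I would also remark that $\rho^\ast$ may fall outside $[-1,1]$, for example when $B<0$ or when $B$ is large. In that case one side of the equivalence is vacuous, but the statement remains true as written.
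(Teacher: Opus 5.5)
Your proposal is correct and follows the paper's own argument: the appendix proof of Theorem~\ref{thm:separability} notes that the phase threshold ``follows directly from the numerator,'' which is exactly your sign analysis of $B-\mu\beta_q\rho\sigma_q\sigma_m$ given $Q+\gamma>0$. Your explicit checks that both divisions preserve sign, and your treatment of the boundary case $\rho=\rho^\ast$ (where $\alpha^\ast=0$), spell out what the paper leaves implicit.
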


Thus, when manipulation cues are entangled with quality, defense also suppresses useful evidence, so the platform weakens its penalty.

\begin{proposition}[Target-selection structure]
\label{prop:rational-attacker}
Suppose the supplier can choose target set $T$, has level utility in Eq.~(\ref{eq:creator-utility}), and faces comparable rewrite costs across equal-size target sets. Its defended target choice satisfies
\begin{equation}
    T^\ast \in \argmax_{T \subseteq D}
    \mathbb{E}\!\left[g_T(\pi_A,\pi_P^\infty)\mid T\right].
    \label{eq:optimal-target}
\end{equation}
For two equal-size target sets $T_H,T_L$, if rewrite costs are equal and
$\mathbb{E}[g_{T_H}(\pi_A,\pi_P^\infty)]>
\mathbb{E}[g_{T_L}(\pi_A,\pi_P^\infty)]$, then $T_L$ is not optimal. In
particular, high-quality targets are preferred whenever their higher baseline
citation level and quality--manipulation correlation produce this strict
ordering.
\end{proposition}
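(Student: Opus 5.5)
The argument is a direct optimization over a finite set. The supplier's objective separates into an exposure term and a cost term, and the hypotheses neutralize the cost term across the comparison sets.

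\textbf{Step 1: write the target-choice objective.} When the supplier chooses $T$ and then its rewrite policy $\pi_A$ against the limiting defense $\pi_P^\infty$, its expected level utility from Eq.~(\ref{eq:creator-utility}) is
\[
U(T)=\mathbb{E}\!\left[g_T(\pi_A,\pi_P^\infty)\mid T\right]-\mathbb{E}\!\left[\kappa_A(\pi_A)\mid T\right].
\]
The platform's defense is fixed at $\pi_P^\infty$, meaning the local stationary response of Definition~\ref{def:local-stationary}, with $\alpha^\ast$ given by Theorem~\ref{thm:separability}. Treating $\pi_P^\infty$ as fixed is legitimate because the supplier moves after observing the defense. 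I state explicitly that it does not depend on the supplier's choice of $T$ at the margin. This is the standard Stackelberg reading and the place where the modelling assumption enters.

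\textbf{Step 2: reduce to the exposure term.} The hypothesis of comparable rewrite costs across equal-size target sets says $\mathbb{E}[\kappa_A\mid T]=\bar\kappa(|T|)$ depends only on the cardinality of $T$. Restricted to a cardinality class, maximizing $U$ is therefore equivalent to maximizing $\mathbb{E}[g_T]$. This gives Eq.~(\ref{eq:optimal-target}), read within the relevant size class, or globally if the cost is constant.

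\textbf{Step 3: strict comparison.} Take equal-size $T_H,T_L$ with equal costs and $\mathbb{E}[g_{T_H}]>\mathbb{E}[g_{T_L}]$. Then $U(T_H)>U(T_L)$, so $T_L$ fails to attain the maximum and is not in the argmax.

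\textbf{Step 4: the ``in particular'' clause.} I expand each target's citation via Eq.~(\ref{eq:source-model}) and Lemma~\ref{lem:first-order}:
\[
c_i(\alpha^\ast)\approx c_i(0)-\alpha^\ast B_i+\tfrac12(\alpha^\ast)^2Q_i .
\]
Two effects favor high-quality targets. First, $c_i(0)$ is increasing in $q_i$ because $\beta_q>0$ and softmax is monotone in its own score. Second, by Theorem~\ref{thm:separability} and Corollary~\ref{cor:phase-transition}, a larger $\rho$ shrinks $\alpha^\ast$, possibly to $0$, which reduces the penalty term $\alpha^\ast B_i$.

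Summing with the weights $w_i(q)$ shows that $\mathbb{E}[g_{T_H}]-\mathbb{E}[g_{T_L}]$ is a baseline gap minus a penalty differential. Whenever these combine to a positive quantity, Step 3 applies. The proposition only claims preference under this strict ordering, so no unconditional sign result is needed.

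\textbf{Main obstacle.} Making Step 1 rigorous is the main difficulty: $\pi_P^\infty$ could itself react to $T$ through $\rho$ and $B$. I would handle this by fixing $\pi_P^\infty$ as the announced stationary defense and noting that the ordering hypothesis is stated directly at $\pi_P^\infty$.
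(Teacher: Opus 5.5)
Your proposal is correct, and it is the argument the paper implicitly relies on. The appendix gives no separate proof of Proposition~\ref{prop:rational-attacker}; it treats the result as immediate once the cost term is constant on equal-size target sets, which is exactly your Steps~1--3.

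Two further points are worth noting. Your caveat that Eq.~(\ref{eq:optimal-target}) follows only within a cardinality class (or globally under constant cost) is a correct sharpening of the statement as written. Step~4 is purely illustrative and should not be counted as part of the proof: its second effect is only heuristic, since an individual $B_i=c_i^{(0)}(m_i-\bar m^{(0)})$ can be negative.
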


\begin{proposition}[Local defense-effectiveness decomposition]
\label{thm:defense-effectiveness}
Let $s_t=\mathbb{E}[m_i\mid i\in T_t]$ be target manipulation magnitude and $e_t=\frac12\|r_t-\hat r_t\|_2^2\in[0,1]$ the rule-estimation error. Assume $r_t$ and $\hat r_t$ are unit vectors in a common rule basis and unit-direction sensitivity is linear in $s_t$. To leading order,
\begin{equation}
    g_T(D_t^a;\pi_{P,t-1})
    -
    g_T(D_t^a;\pi_{P,t})
    =
    c(1-e_t)s_t\alpha_t,
    \label{eq:defense-decay}
\end{equation}
where $c\ge0$ depends only on source-model moments.
\end{proposition}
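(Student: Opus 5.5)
The plan is to linearize the defended exposure in $\alpha_t$ using Lemma~\ref{lem:first-order}, and then split the first-order sensitivity into a magnitude factor and an alignment factor. Fix the rewritten corpus $D_t^a$. The two defenses $\pi_{P,t-1}$ and $\pi_{P,t}$ differ because the update from $\hat r_{t-1}$ to $\hat r_t$ changes which surface patterns are penalized. I will model the answer-time defense $\pi_{P,t}$ as penalizing the projection of each document's rewrite features onto the estimated direction $\hat r_t$, with strength $\alpha_t$. The effective manipulation score in Eq.~(\ref{eq:source-model}) for target $i$ is then $\alpha_t\langle \hat r_t, x_i\rangle$, where $x_i$ is the rewrite feature vector. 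Under the true rule, $x_i = m_i r_t$ to leading order. The stale defense $\pi_{P,t-1}$ was fit to the previous round's rule. Its penalty on the new rewrites is therefore, to leading order, orthogonal to $r_t$, and I treat it as the $\alpha=0$ reference for $D_t^a$.

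\emph{Step 1: first-order expansion.} Applying Lemma~\ref{lem:first-order} to the defended versus reference scores gives
\[
g_T(D_t^a;\pi_{P,t-1})-g_T(D_t^a;\pi_{P,t}) = \alpha_t B_t + O(\alpha_t^2).
\]
Differentiating the softmax gives
\[
B_t=\sum_{i\in T}w_i\,c_i\Bigl(\tilde m_i-\sum_j c_j\tilde m_j\Bigr),
\qquad \tilde m_i=\langle \hat r_t,r_t\rangle m_i .
\]
Dropping the $O(\alpha_t^2)$ term is exactly the ``to leading order'' qualifier in the statement.

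\emph{Step 2: factor out the alignment.} Because $r_t$ and $\hat r_t$ are unit vectors,
\[
\langle \hat r_t,r_t\rangle = 1-\tfrac12\|r_t-\hat r_t\|_2^2 = 1-e_t .
\]
So $B_t=(1-e_t)\,B_t^{\mathrm{unit}}$, where $B_t^{\mathrm{unit}}$ is the sensitivity along a perfectly estimated unit direction. This step also explains $e_t\in[0,1]$: restricting to a nonnegative orthant of the rule basis gives $\langle\hat r_t,r_t\rangle\ge 0$, hence $e_t\le 1$. I will state this restriction explicitly.

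\emph{Step 3: linearity in magnitude.} The assumption that unit-direction sensitivity is linear in $s_t$ lets me write $B_t^{\mathrm{unit}} = c\,s_t$. Here $c$ is the covariance-type softmax weight $\sum_{i\in T} w_i c_i(1-c_i)$, up to the non-target correction, evaluated at the source-model moments. Its sign $c\ge 0$ follows because the target share $\sum_{i\in T}w_ic_i$ is decreasing in a uniform penalty on targets relative to non-targets. Combining Steps 1--3 gives Eq.~(\ref{eq:defense-decay}).

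The main obstacle is making Step 1 honest, that is, justifying that the stale defense contributes nothing along $r_t$. If it does, the difference picks up a term $\alpha_{t-1}(1-e'_t)s_t$ with $e'_t$ measured against $\hat r_{t-1}$. I would first try to absorb this into the definition of $e_t$ as the relative misalignment. Failing that, I would add an explicit assumption that successive rules are orthogonal, or equivalently read the left side as the gain over the undefended reference, as the leading-order statement permits. A secondary issue is that $c$ depends on the $c_i$ and $w_i$, so it is constant only at the leading-order evaluation point $\alpha=0$; that evaluation is what makes $c$ depend only on source-model moments.
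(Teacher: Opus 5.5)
Your proposal is correct and follows the paper's proof essentially step for step. You linearize via Lemma~\ref{lem:first-order}, factor the alignment $\langle r_t,\hat r_t\rangle=1-e_t$ out of the first-order sensitivity, and use the linearity assumption to write $B_t=c\,s_t$; the paper likewise tacitly treats $\pi_{P,t-1}$ as the zero-penalty reference along $r_t$, which is the point you rightly flag as needing an explicit assumption. Your additions (requiring $\langle r_t,\hat r_t\rangle\ge0$ so that $e_t\le1$, and the sign argument for $c$) only make explicit what the paper leaves implicit. One correction: if targets share manipulation $s_t$ and non-targets have $m_j=0$, then $c=(1-\sum_{j\in T}c_j)\sum_{i\in T}w_ic_i$, so the correction to your $\sum_{i\in T}w_ic_i(1-c_i)$ comes from cross-target terms rather than non-target ones.
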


Eq.~(\ref{eq:defense-decay}) shows that defense effectiveness is the product of rule-estimation accuracy, edit magnitude, and defense strength, all of which may decay as the supplier adapts.

\paragraph{Inert stationary outcome.}
A supplier move $\delta=(\delta_q,\delta_m)$ changes the shared platform/user utility by
\begin{equation}
    \Delta U(\delta)=\eta\delta_q+\xi\delta_m,
    \label{eq:welfare-decomp}
\end{equation}
where $\eta>0$ is utility gain from real content improvement and $\xi=\xi_+-\xi_-$ is the net effect of manipulation. With isotropic rewrite cost $\kappa_A(\delta)=\frac12\|\delta\|^2$, the supplier best response satisfies $\delta\propto(\beta_q,-\alpha^\ast)$, hence $\Delta U^\ast\propto\eta\beta_q-\xi\alpha^\ast$.

\begin{corollary}[Locally inert stationary outcome]
\label{cor:welfare-neutral}
At a local stationary response in the high-$\rho$ regime, suppose platform
defense is weak, the supplier's content-improvement gradient is exhausted,
and residual formatting benefit is offset by hallucination harm. Then the
first-order utility change satisfies $\Delta U^\ast\approx0$.
\end{corollary}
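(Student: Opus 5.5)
The plan is to start from the welfare decomposition in Eq.~(\ref{welfare-decomp}) at the supplier's local best response and show that each of the two first-order terms is approximately zero under the stated hypotheses. I would first make the best response explicit. Take the local supplier utility to be the linearization of $g_T$ in $(\delta_q,\delta_m)$ using Eq.~(\ref{eq:source-model}): moving $q_i$ by $\delta_q$ and $m_i$ by $\delta_m$ changes $v_i$ by $\beta_q\delta_q-\alpha^\ast\delta_m$. Hence, to first order, $g_T$ changes by $\lambda(\beta_q\delta_q-\alpha^\ast\delta_m)$, where $\lambda=\sum_{i\in T}w_i c_i(1-c_i)$ up to cross terms. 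With $\kappa_A(\delta)=\frac12\|\delta\|^2$ the maximizer is $\delta^\ast=\lambda(\beta_q,-\alpha^\ast)$. Substituting into Eq.~(\ref{welfare-decomp}) gives
\[
\Delta U^\ast=\lambda\left(\eta\beta_q-\xi\alpha^\ast\right),
\]
which is the proportionality stated just before the corollary. This step is routine.

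Next I would handle the two terms separately, each using one hypothesis.

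\textbf{(i) The defense term.} In the high-$\rho$ regime, Theorem~\ref{thm:separability} together with Corollary~\ref{cor:phase-transition} gives $\alpha^\ast=0$ once $\rho\ge\rho^\ast$. Near the threshold it gives $\alpha^\ast=O(\rho^\ast-\rho)$, which is what "defense is weak" means. So $\xi\alpha^\ast\approx0$ whenever $|\xi|$ is bounded.

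The hypothesis that "residual formatting benefit is offset by hallucination harm" means $\xi_+\approx\xi_-$, so $\xi\approx0$. This kills the term a second time. It also handles the case where the supplier still moves along $\delta_m$ through a nonzero residual $\alpha^\ast$. I would state the bound as $|\xi\alpha^\ast|\le|\xi|\,\alpha^\ast$ and note that it is a product of two small quantities.

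\textbf{(ii) The content term.} Here I need $\lambda\eta\beta_q\approx0$. The constants $\eta$ and $\beta_q$ are positive, so the hypothesis must act on the attainable $\delta_q$. I would formalize "the content-improvement gradient is exhausted" as follows: at the stationary point, the marginal citation gain from quality is zero net of cost. Concretely, let the feasible quality improvement be bounded by the latent ceiling, $q_i\le\bar q$, for high-quality targets (which Proposition~\ref{prop:rational-attacker} says are selected). The constrained best response then has $\delta_q^\ast=0$ in the KKT sense.

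Equivalently, at a fixed point of the local best-response map (Definition~\ref{def:local-stationary}), the accumulated $\delta_q$ from earlier rounds has already moved the document to where $\partial u_A/\partial\delta_q=0$. The residual per-round move is then zero. In either form, $\eta\delta_q^\ast\approx0$.

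Combining (i) and (ii), $\Delta U^\ast=\eta\delta_q^\ast+\xi\delta_m^\ast\approx0$ to first order. The neglected terms are the $O(\alpha^3)$ remainder from Lemma~\ref{lem:first-order} and second-order terms in $\delta$.

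The main obstacle is step (ii). The unconstrained isotropic best response always has $\delta_q\propto\beta_q>0$, so "exhaustion" cannot come from the quadratic model alone. I would first try to introduce the quality ceiling as an explicit constraint in the local neighborhood of Definition~\ref{def:local-stationary} and rederive the best response with a multiplier on $\delta_q$. If that is too ad hoc, the fallback is to read the hypothesis directly as "the projection of $\delta^\ast$ on the quality axis vanishes" and present the corollary as a consequence of that assumption rather than a derived fact.
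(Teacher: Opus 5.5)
Your proposal is correct and follows essentially the same route as the paper. The paper gives no separate appendix proof; it derives the corollary from the preceding paragraph by substituting the best response $\delta^\ast\propto(\beta_q,-\alpha^\ast)$ into $\Delta U=\eta\delta_q+\xi\delta_m$, letting weak defense and $\xi_+\approx\xi_-$ remove the manipulation term, and letting the exhaustion hypothesis remove the content term. Your remark that exhaustion cannot be derived inside the unconstrained isotropic quadratic model, where $\delta_q^\ast\propto\beta_q>0$, and so must be imposed as the assumption $\delta_q^\ast\approx0$, is accurate and matches how the paper implicitly treats that hypothesis.
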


\paragraph{Implication.}
The default game is one-sided: it penalizes manipulation but rewards no
feature directly aligned with platform/user utility. High
quality--manipulation correlation weakens defense, and partial monitoring
further erodes it. The resulting local stationary outcome can be inert:
useful content is not induced, suspicious content is weakly penalized, and
further platform updates have little effect.

\section{Welfare-Incentivizing Mechanism}
\label{sec:mechanism}

\begin{figure}[t]
    \centering
    \includegraphics[width=0.95\linewidth]{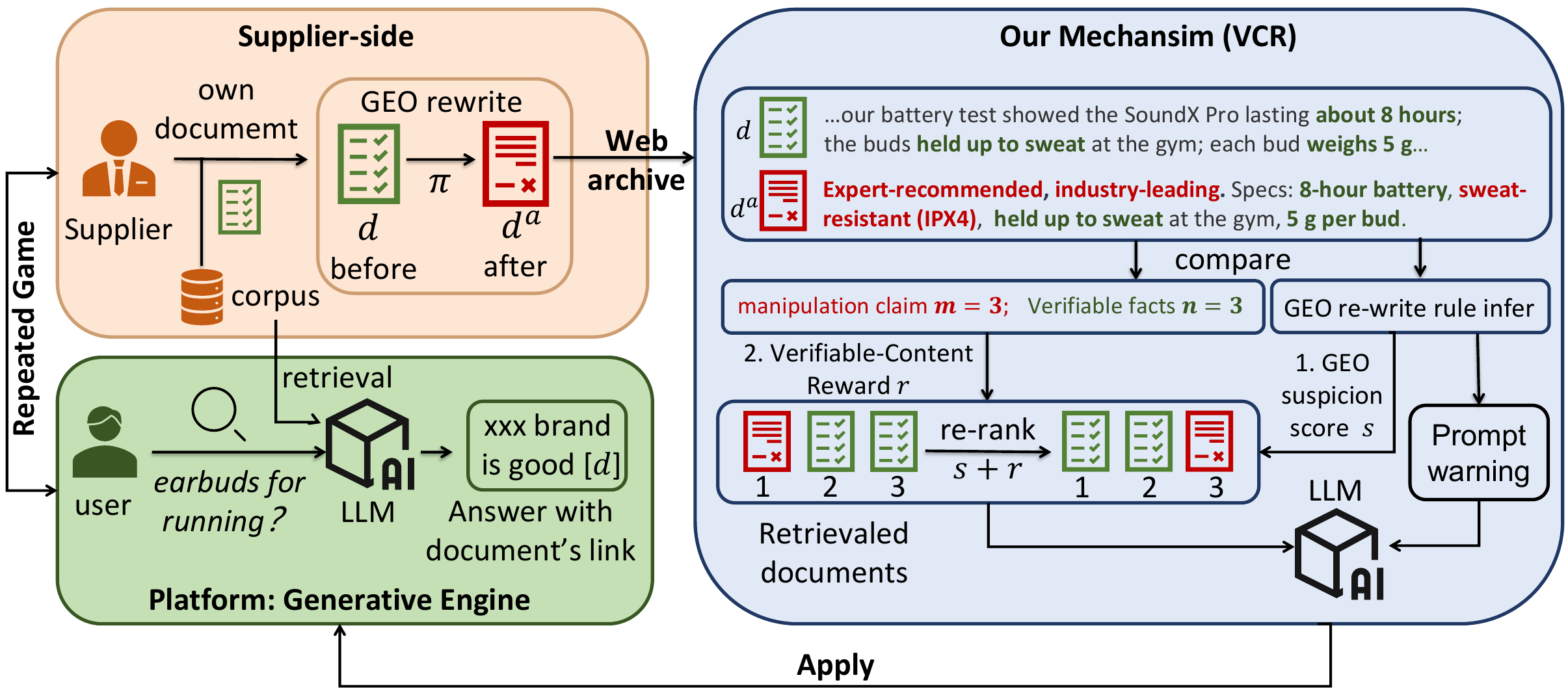}
    \caption{Overview of VCR. The platform extracts GEO rules from before/after pairs, rewards verifiable content, and applies a soft re-ranking for answer-time defense.}
    \label{fig:method}
\end{figure}

Based on Sec.~\ref{sec:framework}, we propose \VCR{}, a platform-side mechanism that turns defense from a one-sided penalty into a two-sided incentive. In one sentence: the platform \emph{credits} rewrites that add pair-verifiable factual substance (Sec.~\ref{subsec:vcr}), \emph{calibrates} this credit with a suspicion penalty distilled from the supplier's observed GEO behavior (Sec.~\ref{subsec:rule-extraction}), and uses the combined score to softly re-rank sources at answer time. Algorithm~\ref{alg:mech} summarizes the policy.

\subsection{Verifiable-Content Reward}
\label{subsec:vcr}

Corollary~\ref{cor:welfare-neutral} explains why penalty-only defenses
stall: they can suppress manipulation but give the supplier no profitable
direction to improve, so the repeated game settles at a welfare-neutral
fixed point. The core of our mechanism is therefore a reward channel that
makes verifiable substance the supplier's most profitable strategy.

\begin{definition}[Verifiable supported content]
For pair $(d_i,d_i^a)$, let $n_i$ be the number of factual claims,
numerical details, or named citations present in the rewrite and
supported by the original document and surfaced more saliently in the
rewrite. We estimate $n_i$ using a pair-level LLM oracle.
\end{definition}

In the running example of Fig.~\ref{fig:method}, the rewrite surfaces three
facts that are checkable against the original, namely the 8-hour battery
life, sweat resistance, and per-bud weight, so $n_i=3$. Additions with no
support in the original, such as ``expert-recommended'' and
``industry-leading,'' earn no credit and instead feed the manipulation
signal $m_i$ of Eq.~(\ref{eq:source-model}), which the platform estimates
by rule matching in Sec.~\ref{subsec:rule-extraction}. Each rewrite earns
the credit
\begin{equation}
    r_i
    =
    \lambda \cdot \min(c_{\max}, c_n n_i),
    \label{eq:vcr-credit}
\end{equation}
where $\lambda$ is reward strength, $c_n$ the per-claim credit, and
$c_{\max}$ the credit cap that bounds how much reward any single rewrite
can farm. The reward alone, however, cannot tell substantive rewrites from
manipulative ones; the platform still needs a penalty side that tracks the
supplier's current GEO strategy, which we extract next.

\subsection{GEO Rule Extraction}
\label{subsec:rule-extraction}


To build this penalty side, the platform turns the supplier's own rewrite
history into evidence of its strategy.
At round $t$, the platform observes before/after pairs
$\mathcal{O}_t=\{(d_i,d_i^a)\}$ and infers a rule set $S_P^{(t)}$
describing the supplier's current GEO strategy. We follow the
Explainer--Extractor--Merger--Filter pipeline of \AutoGEO~\citep{wu2025generative}, but reverse its use: instead of comparing preferred and less-used documents to infer engine preferences, the platform compares original and rewritten documents to infer suspicious rewrite patterns.

For each pair $(d_i,d_i^a)$, the platform computes edit magnitude
$\Delta_i$ and skips trivial rewrites with $\Delta_i<\theta$. The output is $S_P^{(t)}$ and each retrieved document receives a baseline suspicion
score
\begin{equation}
    s_j^{\mathrm{base}}
    =
    \textsc{Match}(d_j,S_P^{(t)}) \cdot \Delta_j,
    \label{eq:base-suspicion}
\end{equation}
where $\textsc{Match}(\cdot)\in[0,1]$ measures rule match strength and
$\Delta_j$ is recent edit magnitude. The suspicion penalty then calibrates
the reward of Eq.~(\ref{eq:vcr-credit}) into a single score,
\begin{equation}
    s_i^{\mathrm{new}}
    =
    s_i^{\mathrm{base}}
    -
    r_i,
    \label{eq:augmented-suspicion}
\end{equation}
so that cosmetic or self-promotional rewrites remain suspicious, while
checkable factual additions receive credit.

\subsection{Soft Re-rank}
\label{subsec:rerank}

The platform converts the combined score
$s_i^{\mathrm{new}}$ into a soft defense signal. It reorders documents, placing less suspicious and more
verification-friendly sources earlier in the context. Finally, the platform adds a system-level
warning from $S_P^{(t)}$ that instructs the engine to treat these labels as auxiliary evidence, prioritize verifiable claims, and avoid
over-crediting sources whose citation value mainly comes from
manipulative rewriting. No document is removed: high-suspicion documents
may still be cited when they provide uniquely necessary evidence.

\begin{algorithm}[t]
\caption{Mechanism-augmented platform policy $\pi_P^{\lambda}$}
\label{alg:mech}
\small
\begin{algorithmic}[1]
\Require Before/after pairs $\mathcal{O}_t=\{(d_i,d_i^a)\}$,
threshold $\theta$, reward parameters $(\lambda,c_n,c_{\max})$.
\State $\mathcal{R}\leftarrow\emptyset$
\For{each $(d_i,d_i^a)\in\mathcal{O}_t$}
    \State Compute edit magnitude $\Delta_i$; continue if $\Delta_i<\theta$.
    \State $e_i \leftarrow \textsc{Explainer}(d_i,d_i^a)$; 
    $r_i \leftarrow \textsc{Extractor}(e_i)$.
    \State $\mathcal{R}\leftarrow \mathcal{R}\cup r_i$.
\EndFor
\State $S_P \leftarrow \textsc{Filter}(\textsc{Merger}(\mathcal{R}))$.
\For{each retrieved document $d_j$}
    \State Set $\Delta_j$ to recent edit magnitude, or $0$ if unobserved.
    \State $s_j^{\mathrm{base}}\leftarrow \textsc{Match}(d_j,S_P)\cdot \Delta_j$.
    \State If pair $(d_j^0,d_j)$ exists, $n_j\leftarrow \textsc{Oracle}(d_j^0,d_j)$; else $n_j\leftarrow0$.
    \State $s_j^{\mathrm{new}}\leftarrow s_j^{\mathrm{base}}-\lambda\min(c_{\max},c_n n_j)$.
\EndFor
\State $W\leftarrow \textsc{Warning}(S_P)$; soft-rerank by ascending $s_j^{\mathrm{new}}$.
\State \Return $(W,\{s_j^{\mathrm{new}}\},\mathrm{order})$.
\end{algorithmic}
\end{algorithm}

\subsection{Analysis}
\label{subsec:mech-theory}

\paragraph{Theoretical analysis.}
We analyze how VCR changes utility at a local stationary response.
In Eq.~(\ref{eq:source-model}), the term $-\alpha m_i$ is the linearized
effect of the platform's suspicion channel on the engine's pre-softmax
citation logit. Likewise, the VCR credit in
Eq.~(\ref{eq:vcr-credit}) induces a positive $+\lambda n_i$
shift, absorbing $c_n$ into $\lambda$ and ignoring the cap locally. The
augmented logit is
\begin{equation}
    v_i(\alpha,\lambda)
    =
    \beta_q q_i - \alpha m_i + \lambda n_i + b_i.
    \label{eq:augmented-value}
\end{equation}
The key assumption is that source-supported content is locally separable from
manipulation and positively aligned with the shared platform/user utility.

\begin{assumption}
\label{asmp:n-aligned}
The signal $n_i$ is measurable from the before/after pair, satisfies
$\mathbb{E}[(n-\bar n)(m-\bar m)]=0$, and has positive marginal
platform/user utility
$\eta_n=\frac{\partial}{\partial n}\mathbb{E}[U\mid n]>0$. The supplier's
local rewrite cost is $\frac12\delta^\top H\delta$, where $H\succ0$ and the
$n$ coordinate is block-separable from $(q,m)$. The platform uses the local
quadratic loss of Thm.~\ref{thm:separability}, whose reward--defense mixed
partial vanishes at $\lambda=0$:
$\partial^2L_P/(\partial\alpha\partial\lambda)=0$.
\end{assumption}

Note that our empirical experiments on three datasets show that the correlation $\widehat{\rho}_{m,n}\in [-0.05, 0.1]$, which is a reasonable assumption.

\begin{theorem}[Local two-sided utility improvement]
\label{thm:mechanism-welfare}
Under Asm.~\ref{asmp:n-aligned}, let
$(\delta^\ast(\lambda),\alpha^\ast(\lambda))$ be the local stationary
response of Def.~\ref{def:local-stationary}. Then
$\alpha^\ast(\lambda)=\alpha^\ast(0)+O(\lambda^2)$ and the shared
platform/user utility satisfies
\begin{equation}
    \Delta U^\ast(\lambda)
    =
    \Delta U^\ast(0) + \eta_n\lambda\Theta,
\end{equation}
where $\Theta=(H^{-1})_{nn}>0$. Hence, for sufficiently small $\lambda>0$,
$\Delta U^\ast(\lambda)>\Delta U^\ast(0)$. If the default equilibrium is
locally inert, VCR strictly improves the platform/user side to first order.
The creator's optimized local objective changes only at second order. Thus,
to first order, VCR improves the platform/user side while preserving creator
utility---the theoretical counterpart of the empirical equivalence test.
\end{theorem}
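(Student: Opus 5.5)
The plan is to follow the same local quadratic structure used for Thm.~\ref{thm:separability} and the inert-outcome calculation around \eqref{eq:welfare-decomp}, now with the augmented logit \eqref{eq:augmented-value}. Concretely, I would extend the supplier's move to $\delta=(\delta_q,\delta_m,\delta_n)$. Linearizing $g_T$ in the logits, the first-order gradient of supplier exposure with respect to $\delta$ is proportional to $(\beta_q,-\alpha,\lambda)$, with a common positive factor equal to the softmax sensitivity $\sum_{i\in T}w_i c_i(1-c_i)$ evaluated at the base point. With cost $\frac12\delta^\top H\delta$, the supplier's local best response is
\[
\delta^\ast(\alpha,\lambda)=H^{-1}(\beta_q,-\alpha,\lambda)^\top ,
\]
after absorbing the positive factor into the normalization already used in the isotropic case. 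Block-separability of $H$ then gives $\delta_n^\ast=(H^{-1})_{nn}\lambda=\Theta\lambda$. The $(q,m)$ block of $\delta^\ast$ depends on $\alpha$ but not on $\lambda$. Positivity of $\Theta$ follows from $H\succ0$, since the diagonal entries of $H^{-1}$ are positive.

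Next I would handle the platform side. I write the platform's local quadratic loss $L_P(\alpha;\lambda)$ as in the proof of Thm.~\ref{thm:separability}. The first-order condition $\partial_\alpha L_P(\alpha^\ast(\lambda);\lambda)=0$ combined with the implicit function theorem gives
\[
\frac{d\alpha^\ast}{d\lambda}=-\frac{\partial^2 L_P/\partial\alpha\partial\lambda}{\partial^2L_P/\partial\alpha^2}.
\]
The denominator equals $Q+\gamma>0$. The numerator vanishes at $\lambda=0$ by Asm.~\ref{asmp:n-aligned}. Hence $\alpha^\ast(\lambda)=\alpha^\ast(0)+O(\lambda^2)$. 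In the boundary case $\alpha^\ast=0$, the $\max\{0,\cdot\}$ is locally constant if the interior expression is strictly negative, so the claim still holds; I would note this case separately.

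For the utility statement, I extend \eqref{eq:welfare-decomp} to $\Delta U(\delta)=\eta\delta_q+\xi\delta_m+\eta_n\delta_n$. The zero-covariance condition $\mathbb{E}[(n-\bar n)(m-\bar m)]=0$ justifies treating $\eta_n$ as a separate marginal effect, with no cross term through $m$. Substituting the stationary pair, the $(q,m)$ contribution is evaluated at $\alpha^\ast(\lambda)=\alpha^\ast(0)+O(\lambda^2)$, so it equals $\Delta U^\ast(0)+O(\lambda^2)$. The $n$ contribution is $\eta_n\Theta\lambda$. This gives the displayed identity to first order in $\lambda$. Strict improvement for small $\lambda>0$ follows from $\eta_n\Theta>0$. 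In the inert case, where $\Delta U^\ast(0)\approx0$ by Cor.~\ref{cor:welfare-neutral}, this yields a strictly positive first-order gain.

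For the creator side, the optimized objective is $u_A^\ast(\lambda)=\max_\delta\{\text{linear gain}-\tfrac12\delta^\top H\delta\}$. By the envelope theorem, its derivative in $\lambda$ equals the partial derivative at fixed $\delta^\ast$, plus a term from $d\alpha^\ast/d\lambda$. The latter vanishes at $\lambda=0$. The former is proportional to $\delta_n^\ast(0)=0$, or more generally to the baseline $n$ term, which I would center. So $u_A^\ast(\lambda)-u_A^\ast(0)=\frac12\Theta\lambda^2+O(\lambda^2)$, which is second order.

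The main obstacle is the platform step. One must verify that $\alpha$ and $\lambda$ enter $L_P$ without a cross term coming from $g_T$'s dependence on both through the softmax. The assumption on the mixed partial is essentially what makes this work, and I would justify it using the zero $m$–$n$ covariance: the cross term in the second-order expansion of $g_T$ is proportional to $\mathrm{cov}(m,n)$.
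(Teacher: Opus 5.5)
Your proposal is correct and follows the paper's proof essentially step for step: the supplier best response $\delta^\ast=H^{-1}(\beta_q,-\alpha^\ast,\lambda)^\top$, with block separability giving $\delta_n^\ast=\Theta\lambda$; the implicit-function argument, where the vanishing mixed partial from Asm.~\ref{asmp:n-aligned} gives $\alpha^\ast(\lambda)=\alpha^\ast(0)+O(\lambda^2)$; and the extended decomposition $\Delta U=\eta\delta_q+\xi\delta_m+\eta_n\delta_n$. The differences are minor. On the creator side you use the envelope theorem, while the paper expands $J_A^\ast=\tfrac12 a^\top H^{-1}a$ directly. Your planned justification of the mixed partial via $\mathrm{cov}(m,n)=0$ is unnecessary, since Asm.~\ref{asmp:n-aligned} assumes that vanishing outright. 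It is also not quite accurate: the softmax cross term is a target- and $c^{(0)}$-weighted covariance, not the population one.
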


Full proof is in Appendix~\ref{app:proofs}. Intuitively, VCR adds a third
strategic direction: verifiable substance. Without the reward, suppliers
gain visibility through latent quality or manipulation signals; with VCR,
they can gain citation value by surfacing checkable factual content. The
platform still demotes manipulation, but also rewards substantive
rewrites, making defense an incentive mechanism rather than only a
filter.

\paragraph{Discussion on practical deployment.}
VCR targets versioned content, which covers a substantial practical scope: a
previous study found that about 40\% of pages change within a week, whereas
about 8\% are newly created \citep{fetterly2003evolution,ntoulas2004new}.
For a new page without a prior version, $n=0$, so VCR degrades gracefully to
its $\lambda=0$ suspicion-only special case. On the reward side, the
conservative pair-verifiable scope limits adjudication ambiguity and gaming;
we analyze this choice, together with the verification anchor and rule
disclosure, in Sec.~\ref{subsec:analysis}.

\section{Experiments}
\label{sec:results}

We evaluate whether the proposed verifiable-content reward (VCR) improves
the joint defense--utility outcome in repeated simulations. Additional
analyses, including direct quality measures, judge robustness, adaptive
suppliers, and case studies, are deferred to
App.~\ref{app:additional-experiments}.


\subsection{Experimental Setup}
\label{subsec:setup}

\paragraph{Datasets.}
We use three retrieval-augmented benchmarks with different task regimes:
\textsc{E-commerce}~\cite{wu2025generative}, \textsc{GEO-Bench}~\cite{aggarwal2024geo},
and \textsc{Researchy-GEO}~\cite{wu2025generative}. Following the evaluation
construction of AutoGEO~\cite{wu2025generative}, each query is paired with
$K=5$ candidate documents; the held-out test splits contain up to $1{,}000$
queries.

\begin{table}[t]
\centering
\small
\setlength{\tabcolsep}{1.8pt}
\caption{Repeated-game results across three benchmarks and three answer
engines. Def. is the shared platform/user-side utility, Welf. is creator
exposure utility, and Net is their sum; all values are percentage points.
Creator utility within $\pm5$ points of zero is treated as empirically
equivalent to the no-exploitation reference.}
\begin{tabular}{lrrrrrrrrr}
\toprule
Defense
& \multicolumn{3}{c}{\textsc{E-commerce}}
& \multicolumn{3}{c}{\textsc{GEO-Bench}}
& \multicolumn{3}{c}{\textsc{Researchy-GEO}} \\
\cmidrule(lr){2-4}\cmidrule(lr){5-7}\cmidrule(l){8-10}
& Def. & Welf. & Net & Def. & Welf. & Net & Def. & Welf. & Net \\
\midrule
\multicolumn{10}{c}{\textbf{Gemini-flash-2.5-lite}} \\
\midrule
Prompt
& $2.7$ & $-2.0$ & $0.7$ & $3.4$ & $-3.5$ & $-0.1$ & $0.8$ & $-0.7$ & $0.1$ \\
Hard reject
& $50.2$ & $-50.8$ & $-0.6$ & $33.8$ & $-33.1$ & $0.7$ & $63.8$ & $-63.3$ & $0.5$ \\
Keyword scrub
& $-1.0$ & $0.0$ & $-1.0$ & $-0.1$ & $0.3$ & $0.1$ & $0.2$ & $0.2$ & $0.4$ \\
\textbf{VCR (ours)}
& $\mathbf{14.4}$ & $\mathbf{2.2}$ & $\mathbf{16.6}$ & $\mathbf{11.4}$ & $\mathbf{-3.3}$ & $\mathbf{8.1}$ & $\mathbf{8.0}$ & $\mathbf{-1.7}$ & $\mathbf{6.3}$ \\
\midrule
\multicolumn{10}{c}{\textbf{GPT-4o-mini}} \\
\midrule
Prompt
& $-0.7$ & $-3.2$ & $-3.9$ & $-0.5$ & $-1.9$ & $-2.5$ & $-0.4$ & $0.4$ & $0.0$ \\
Hard reject
& $22.6$ & $-21.6$ & $1.0$ & $67.3$ & $-67.5$ & $-0.2$ & $-18.9$ & $20.0$ & $1.1$ \\
Keyword scrub
& $-0.8$ & $1.1$ & $0.3$ & $1.7$ & $-2.0$ & $-0.2$ & $-2.5$ & $2.6$ & $0.1$ \\
\textbf{VCR (ours)}
& $\mathbf{10.9}$ & $\mathbf{-1.2}$ & $\mathbf{9.7}$ & $\mathbf{9.7}$ & $\mathbf{-4.7}$ & $\mathbf{5.0}$ & $\mathbf{5.0}$ & $\mathbf{-1.7}$ & $\mathbf{3.3}$ \\
\midrule
\multicolumn{10}{c}{\textbf{Claude-Haiku-4.5}} \\
\midrule
Prompt
& $20.0$ & $-22.9$ & $-2.9$ & $-3.4$ & $3.5$ & $0.1$ & $-0.9$ & $0.2$ & $-0.7$ \\
Hard reject
& $5.5$ & $-5.6$ & $-0.1$ & $16.6$ & $-19.8$ & $-3.1$ & $85.7$ & $-85.7$ & $0.0$ \\
Keyword scrub
& $0.4$ & $1.1$ & $1.5$ & $-2.7$ & $6.4$ & $3.7$ & $2.1$ & $-1.2$ & $0.9$ \\
\textbf{VCR (ours)}
& $\mathbf{29.1}$ & $\mathbf{-3.5}$ & $\mathbf{25.6}$ & $\mathbf{29.1}$ & $\mathbf{-4.8}$ & $\mathbf{24.3}$ & $\mathbf{22.0}$ & $\mathbf{-1.7}$ & $\mathbf{20.3}$ \\
\bottomrule
\end{tabular}
\label{tab:main}
\end{table}

\paragraph{Metrics.}
We measure creator visibility using the target GEO score $g$.
Following AutoGEO~\citep{wu2025generative} and the GEO visibility
metric~\citep{aggarwal2024geo}, $g$ aggregates how much, where, and how
prominently the creator's target documents are cited in the generated
answer. Full details are given in App.~\ref{app:metrics}.

For each defense, we compute $g$ under three states:
$g_0$ is the no-attack score on the original corpus, $g_a$ is the score
after GEO rewriting with no defense, and $g_d$ is the score after GEO
rewriting with the platform defense applied. We report two stakeholder
sides. Because the platform's objective here is to protect the quality and
trustworthiness of the answer shown to the user, platform and user interests
are represented jointly by Def.; Welf. represents the creator side. We compute
\begin{equation}
\mathrm{Def.}
=
(g_a-g_d)/g_a,
\qquad
\mathrm{Welf.}
=
(g_d-g_0)/g_0,
\qquad
\mathrm{Net}
=
\mathrm{Def}+\mathrm{Welf}.
\end{equation}
Platform/user utility (Def.) measures the fraction of manipulation-induced
visibility rolled back by the platform. Creator utility (Welf.) measures target-source exposure relative to
the no-attack baseline; negative values indicate that the defense
suppresses source exposure below its natural level. $\mathrm{Net}$ is an
equal-weight summary of these two normalized stakeholder utilities. Prior
repeated runs showed approximately $5$ percentage-point standard deviation,
so we use a pre-specified $\pm5$-point empirical equivalence band: creator
utility in this interval is interpreted as exposure preserved within run
variability, rather than as a meaningful loss or gain. We nevertheless report
the signed point estimates; per-query bootstrap confidence intervals for the
default-engine main results are in Table~\ref{tab:bootstrap-ci}
(App.~\ref{app:uncertainty}), where VCR's Net interval is separated from
those of all baselines on every dataset. Direct document and answer quality
measures check the platform/user interpretation independently.

\paragraph{Baselines and implementation.}
We compare VCR with three classical defenses spanning prompt-level soft
signals, source-level exclusion, and content-level filtering.
\textbf{Prompt defense} attaches suspicion labels, softly reorders documents,
and adds a system-prompt warning. \textbf{Hard reject} removes suspicious
documents. \textbf{Keyword scrub} filters GEO-style phrases. Unless stated
otherwise, the attacker is \AutoGEO, the answer engine is
\texttt{gemini-2.5-flash-lite}, and $\lambda=1$. Full details are in
Apps.~\ref{app:implementation} and~\ref{app:baselines-impl}.


\subsection{Main Results}
\label{subsec:main-results}

We report three main results: VCR achieves the best defense--utility (Net)
outcome across datasets and engines, this outcome is backed by genuinely
better documents and answers, and it generalizes across GEO attack
strategies.

\paragraph{Net outcome across datasets and engines.}
Table~\ref{tab:main} shows that VCR achieves the largest Net on every
dataset and engine, with a $12.1$ percentage-point average advantage over
the strongest baseline in each setting. Classical defenses fail
differently: Prompt defense and Keyword scrub approach inert outcomes,
while Hard reject buys defense by suppressing creator exposure almost
one-for-one. VCR sustains defense while keeping creator exposure within
the equivalence band in all nine settings, and it remains the only
defense with strictly positive Net when the engine is replaced by
\texttt{gpt-4o-mini} or \texttt{claude-haiku-4-5}, indicating that it
operates on platform-side source selection rather than idiosyncrasies of
one generator. We also sweep alternative weightings of the two stakeholder
utilities in the Net definition. VCR is preferred when both sides materially
enter the objective; Hard reject overtakes only under a short-sighted regime that
heavily prioritizes immediate suppression over creator exposure, while
Keyword scrub wins only when defense is nearly ignored
(App.~\ref{app:net-omega}).

\paragraph{Direct document and answer quality.}
Table~\ref{tab:quality-main} complements citation exposure with direct
LLM-rubric evaluation. VCR has the highest point estimate on all five document
dimensions and on answer-quality average. Thus, its Net advantage is accompanied
by more substantive rewrites and answers, rather than only better source
placement. Per-round answer-quality trajectories are in
App.~\ref{app:geu-ecomm}.

\paragraph{Generalization across GEO attack strategies.}
Figure~\ref{fig:robust-trajectory}(a) evaluates whether VCR depends on a specific GEO attacker. We replace the default \AutoGEO{} with four additional attackers on \textsc{E-commerce}: {\scshape RAID}~\citep{chen2025role}, {\scshape IF-GEO}~\citep{zhou2026ifgeo}, \textsc{SAGEO}~\citep{kim2026sageo}, and Statistics Addition~\citep{aggarwal2024geo}. Across all five attackers, VCR maintains positive Net at $R_5$ and consistently exceeds the three classical defenses, mirroring the \AutoGEO{} pattern.

\begin{table}[t]
\centering
\setlength{\tabcolsep}{1.3pt}
\caption{Direct quality on \textsc{E-commerce}. All metric names and
averages are reported explicitly.}
\label{tab:quality-main}
\begin{tabular}{@{}lrrrrrrrrrr@{}}
\toprule
& \multicolumn{6}{c}{\textbf{Rewritten document}}
& \multicolumn{4}{c}{\textbf{Generated answer}}\\
\cmidrule(lr){2-7}\cmidrule(l){8-11}
Defense & Clarity & Depth & \shortstack{Insight} & Factuality & Usefulness & Average
& Clarity & Depth & \shortstack{Insight} & Average\\
\midrule
Prompt & .820 & .696 & .605 & .778 & .718 & .724 & .549 & .512 & .457 & .506\\
Hard reject & .810 & .727 & .641 & .788 & .740 & .741 & .554 & \textbf{.526} & .464 & .515\\
Keyword scrub & .813 & .701 & .608 & .785 & .724 & .726 & .554 & .524 & .465 & .515\\
\textbf{VCR} & \textbf{.821} & \textbf{.741} & \textbf{.660} & \textbf{.820} & \textbf{.757} & \textbf{.760} & \textbf{.564} & \textbf{.526} & \textbf{.471} & \textbf{.520}\\
\bottomrule
\end{tabular}
\end{table}

\begin{figure}[t]
\centering
\begin{minipage}[b]{0.66\linewidth}
  \centering
  \includegraphics[width=\linewidth]{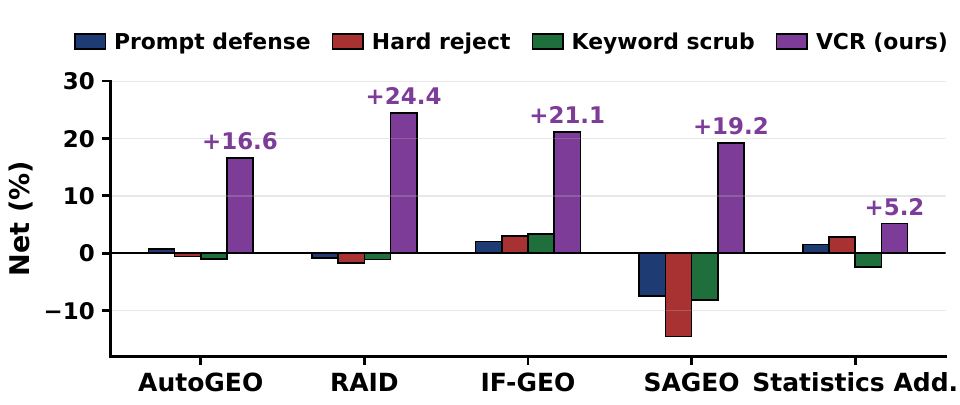}\\
  \small (a) $R_5$ Net (\%) across five GEO attackers.
\end{minipage}\hfill
\begin{minipage}[b]{0.33\linewidth}
  \centering
  \includegraphics[width=\linewidth]{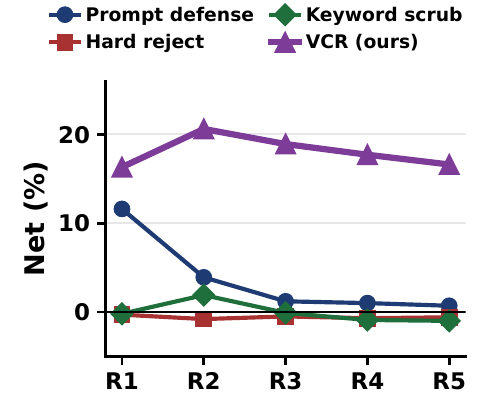}\\
  \small (b) Per-round Net (\%).
\end{minipage}
\caption{
\VCR{} across GEO attackers and interaction rounds on \textsc{E-commerce}.
(a)~Across five GEO attackers, \VCR{} stays positive and consistently exceeds the three classical defenses by a large margin.
(b)~Over five rounds under \AutoGEO, \VCR{} is the only defense that consistently maintains positive Net.
}
\label{fig:robust-trajectory}
\end{figure}

\subsection{Robustness Analysis}
\label{subsec:other-attackers}


\paragraph{Round-by-round performance.}
Figure~\ref{fig:robust-trajectory}(b) shows the
five-round Net trajectory on \textsc{E-commerce} under \AutoGEO{}. \VCR{} is the only method that stays in the positive Net region across rounds. In contrast, Prompt defense gradually decays toward an
inert outcome, Keyword scrub remains close to zero, and Hard reject
stays slightly negative because its defense gains are offset by source-exposure
losses. Trajectories on the other two datasets are reported in App.~\ref{app:trajectory-full}.

\begin{figure}[t]
\centering
\includegraphics[width=\linewidth]{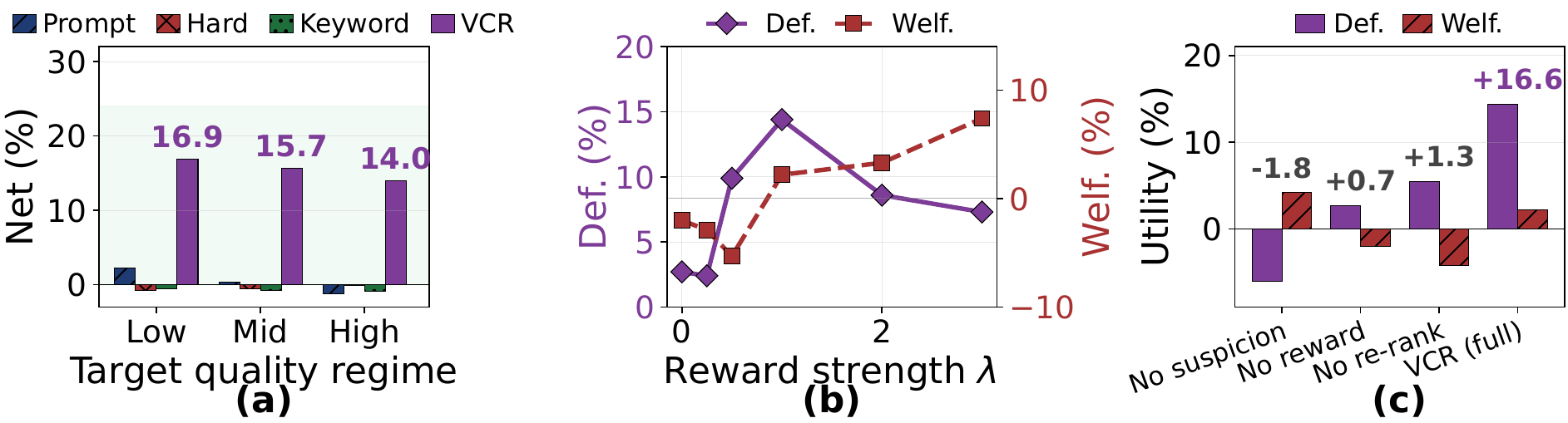}
\caption{
Robustness and ablation checks of \VCR{} on \textsc{E-commerce}.
\textbf{(a)} Net when GEO rewrites target the lowest-, middle-, or
highest-quality candidate documents.
\textbf{(b)} Reward-strength sweep over $\lambda$.
\textbf{(c)} Ablation of the suspicion penalty, the verifiable-content
reward, and the soft re-rank, with Net shown above each pair of bars.
}
\label{fig:robustness-analysis}
\end{figure}

\paragraph{GEO on different target documents.}
Figure~\ref{fig:robustness-analysis}(a) reports Net under three target-quality regimes. Low, Mid, and High denote applying GEO strategies on the lowest-, middle-, and highest-quality documents among the five retrieved candidates. \VCR{} achieves the highest Net in all regimes, showing that it remains effective even when applying GEO on target high-quality documents, where manipulation signals are more correlated with genuine quality.

\paragraph{Ablation on reward strength $\lambda$.}
Figure~\ref{fig:robustness-analysis}(b) sweeps the reward strength $\lambda$. As $\lambda$ increases, Def. rises and then plateaus, while source-exposure utility (Welf.) moves from negative to positive. This matches the predicted utility improvement, bounded by the credit cap.

\paragraph{Ablation on the penalty and reward channels.}
Figure~\ref{fig:robustness-analysis}(c) removes each component of VCR in
turn. \emph{No suspicion} keeps the verifiable-content credit but drops
the penalty: Net turns negative, as the supplier farms the reward without
being defended against. \emph{No reward} is the penalty-only Prompt
defense, which stays near the inert outcome. \emph{No re-rank} calls the
same pair oracle as VCR but only hands its output to the engine as text
in the prompt, so it matches VCR's LLM budget and information; the
outcome stays at the No-reward level, showing that the extra oracle call
contributes nothing unless the verification signal is enforced through
the soft re-rank. Only the full combination converts the signal into a
joint gain, so VCR's advantage comes from the two-sided incentive rather
than from additional LLM budget.

\paragraph{Additional robustness checks.}
As a supplement, the VCR--Prompt reward ordering is unchanged under
GPT-4o-mini, Claude Haiku 4.5, and Gemini 2.5 Flash-Lite judges
(App.~\ref{app:judge-robustness}). VCR also gives the best answer utility
and precision in multi-turn search (App.~\ref{app:multi-turn}), while the
extended run is stable at $R_5$ before terminating at $R_8$
(Fig.~\ref{fig:extended-convergence}). Full deployment details, prompt and
oracle specifications, and the roughly 400-ms latency check are in
Apps.~\ref{app:implementation} and~\ref{app:prompts}.

\begin{table}[t]
\centering
\setlength{\tabcolsep}{3pt}
\renewcommand{\arraystretch}{1.25}
\caption{Platform-extracted GEO rules from supplier rewrites at rounds 1, 3, and 5 on \textsc{E-commerce}, comparing the standard exploit baseline with our VCR mechanism.}
\begin{tabular}{@{}c p{0.44\linewidth} p{0.44\linewidth}@{}}
\toprule
Round & Exploits (baseline) & VCR (ours) \\
\midrule
$R_1$
&
\textbf{Formatting:} Hierarchical structure, bullets, bolding, concise wording; avoid jargon.
&
\textbf{Quality:} Hierarchical structure, scannable formatting, nuances, and distinctions.
\\
\midrule
$R_3$
&
\textbf{Manipulation:} Authentic-source framing, Non-GEO style, front-loaded claims, strategic format.
&
\textbf{Explanation:} Mechanisms, rationales, causal links, accurate context, and explicit attribution.
\\
\midrule
$R_5$
&
\textbf{Manipulation:} Novel or authoritative information aligned with GEO principles; hidden GEO intent with AI-friendly form.
&
\textbf{Verifiability:} Core findings separated from secondary information; verifiable, attributed, recent, and accurate information.
\\
\bottomrule
\end{tabular}
\label{tab:case-study-rules}
\end{table}



\subsection{Understanding the Effect of VCR}
\label{subsec:analysis}

VCR's advantage rests on three design questions that
Figure~\ref{fig:understanding} answers in turn:
\textbf{RQ1:} \emph{What content should the reward credit?}~(a);
\textbf{RQ2:} \emph{What evidence should verification trust?}~(b); and
\textbf{RQ3:} \emph{Does the mechanism survive disclosure of its rule?}~(c).
Together, the answers show that the two-sided improvement comes from the incentive
structure itself, not from a lucky choice of reward scope, a trusting
verifier, or secrecy.

\begin{figure}[t]
\centering
\includegraphics[width=\linewidth]{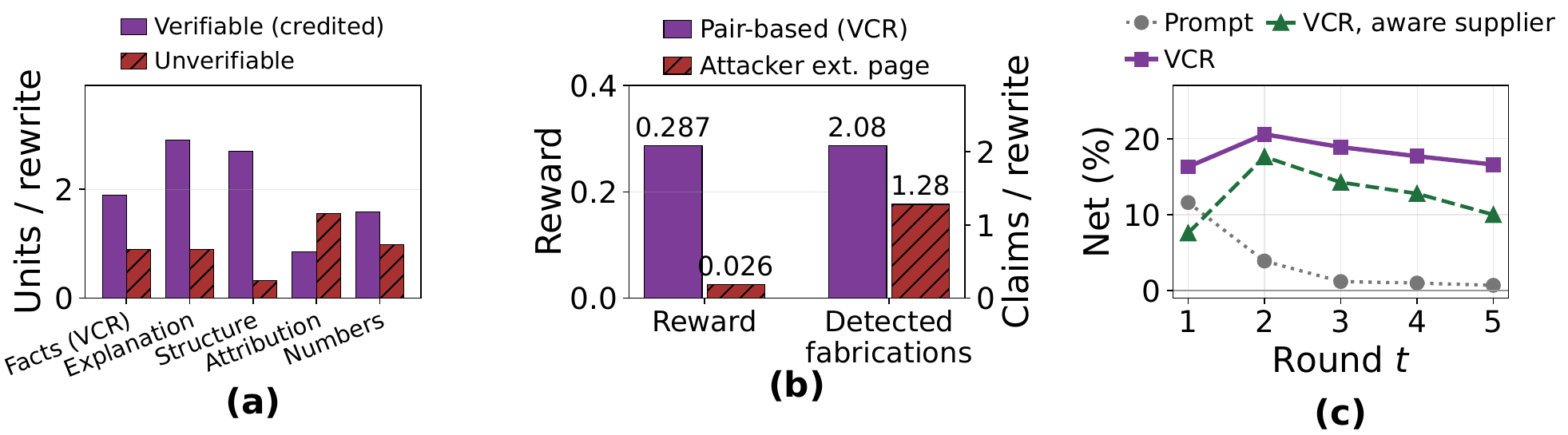}
\caption{
Probing the three design choices of the VCR reward channel on
\textsc{E-commerce}.
\textbf{(a)} Verifiable versus unverifiable content units added per
rewrite under five candidate reward scopes.
\textbf{(b)} Reward earned and fabrications detected when verification uses
the original pair versus an attacker-created external ``support page.''
\textbf{(c)} Net over rounds when the supplier is told the VCR rule and
optimizes against it.
}
\label{fig:understanding}
\end{figure}

\paragraph{RQ1: what content should the reward credit?}
Fig.~\ref{fig:understanding}(a) re-scores the same rewrites under five
candidate reward scopes, separating content units that are grounded in the
original from those that cannot be verified. Fact-style units are
predominantly verifiable, whereas rewarding attribution or authority signals
would credit a content type dominated by unverifiable claims, directly
inviting fabricated sourcing. Broadening the scope to explanatory or
structural content preserves a positive Net while trading defense against
exposure, so the conservative fact-level scope is a robust rather than
knife-edge choice.

\paragraph{RQ2: what evidence should verification trust?}
Given fact-level units, the next question is what evidence may vouch for
them. After each rewrite, the attacker additionally creates an external
``support page'' that justifies its fabricated claims. As shown in
Fig.~\ref{fig:understanding}(b), verifying against the attacker-created
page nearly eliminates the earned reward, yet simultaneously weakens the
fabrication penalty, letting a substantial share of unsupported claims go
undetected. External verification therefore opens a circular, gameable
channel, whereas verification anchored on the original document pair stays
outside the attacker's control; trusted external verifiers remain future
work.

\paragraph{RQ3: does the mechanism survive disclosure?}
With the reward unit and verification anchor fixed, the remaining concern is
that disclosing the resulting rule invites reward hacking. In
Fig.~\ref{fig:understanding}(c), the supplier is explicitly told the VCR
rule and keeps optimizing against it over rounds. Its Net drops relative to
the standard supplier but remains far above Prompt throughout. Because the
reward only credits verifiable factual substance, the most profitable way to
``game'' VCR is to actually add checkable content, so strategic awareness
weakens but cannot invert the joint gain.

\paragraph{Case study: GEO rules.}
Table~\ref{tab:case-study-rules} illustrates these incentives qualitatively by showing how platform-inferred GEO rules evolve on \textsc{E-commerce}. At $R_1$, both settings mainly capture formatting signals. By $R_3$ and $R_5$, the baseline shifts toward manipulation cues, such as authority framing and hidden GEO intent, while VCR shifts toward quality cues, such as explanation, attribution, and verifiability. Since the extractor is unchanged, the divergence comes from the supplier's response to the VCR reward. More examples are provided in Apps.~\ref{app:rule-alignment}, \ref{app:cross-attacker-rules}, and~\ref{app:case-study}.

\section{Conclusion}
\label{sec:conclusion}

We studied citation competition as a repeated platform--creator game and showed how conventional defenses can approach an inert outcome. VCR instead rewards source-supported factual substance while penalizing suspicious manipulation. Across the evaluated simulations, this incentive yields higher Net defense--utility, more substantive rewrites, and better generated answers; creator exposure remains within the $5$-point empirical equivalence band in all nine settings. To our knowledge, this is the first mechanism-design treatment of generative engine optimization that turns platform defense from a filter into a two-sided incentive; extending the model beyond a single platform and creator pool is a promising next step.

\bibliographystyle{plainnat}
\bibliography{custom}

\begin{thebibliography}{50}
\providecommand{\natexlab}[1]{#1}
\providecommand{\url}[1]{\texttt{#1}}
\expandafter\ifx\csname urlstyle\endcsname\relax
  \providecommand{\doi}[1]{doi: #1}\else
  \providecommand{\doi}{doi: \begingroup \urlstyle{rm}\Url}\fi

\bibitem[Aggarwal et~al.(2024)Aggarwal, Murahari, Rajpurohit, Kalyan, Narasimhan, and Deshpande]{aggarwal2024geo}
Pranjal Aggarwal, Vishvak Murahari, Tanmay Rajpurohit, Ashwin Kalyan, Karthik Narasimhan, and Ameet Deshpande.
\newblock Geo: Generative engine optimization.
\newblock In \emph{Proceedings of the 30th ACM SIGKDD conference on knowledge discovery and data mining}, pages 5--16, 2024.

\bibitem[Asai et~al.(2024)Asai, Wu, Wang, Sil, and Hajishirzi]{asai2023selfrag}
Akari Asai, Zeqiu Wu, Yizhong Wang, Avirup Sil, and Hannaneh Hajishirzi.
\newblock Self-rag: Learning to retrieve, generate, and critique through self-reflection.
\newblock In \emph{The Twelfth International Conference on Learning Representations (ICLR)}, 2024.

\bibitem[Bagga et~al.(2025)Bagga, Farias, Korkotashvili, Peng, and Wu]{bagga2025egeo}
Puneet~S Bagga, Vivek~F Farias, Tamar Korkotashvili, Tianyi Peng, and Yuhang Wu.
\newblock E-geo: A testbed for generative engine optimization in e-commerce.
\newblock \emph{arXiv preprint arXiv:2511.20867}, 2025.

\bibitem[Becchetti et~al.(2006)Becchetti, Castillo, Donato, Leonardi, and Baeza{-}Yates]{becchetti2006linkbased}
Luca Becchetti, Carlos Castillo, Debora Donato, Stefano Leonardi, and Ricardo~A. Baeza{-}Yates.
\newblock Link-based characterization and detection of web spam.
\newblock In \emph{AIRWeb 2006, Proceedings of the Second International Workshop on Adversarial Information Retrieval on the Web, Seattle, Washington, USA, 10 August 2006, co-located with {SIGIR} 2006}, pages 1--8, 2006.

\bibitem[Ben-Porat and Tennenholtz(2018)]{ben-porat2018mechanism}
Omer Ben-Porat and Moshe Tennenholtz.
\newblock A game-theoretic approach to recommendation systems with strategic content providers.
\newblock In \emph{Advances in Neural Information Processing Systems 31 (NeurIPS)}, 2018.

\bibitem[Bohnet et~al.(2023)Bohnet, Tran, Verga, Aharoni, Andor, Soares, Ciaramita, Eisenstein, Ganchev, Herzig, Hui, Kwiatkowski, Ma, Ni, Saralegui, Schuster, Cohen, Collins, Das, Metzler, Petrov, and Webster]{bohnet2022attributed}
Bernd Bohnet, Vinh~Q. Tran, Pat Verga, Roee Aharoni, Daniel Andor, Livio~Baldini Soares, Massimiliano Ciaramita, Jacob Eisenstein, Kuzman Ganchev, Jonathan Herzig, Kai Hui, Tom Kwiatkowski, Ji~Ma, Jianmo Ni, Lierni~Sestorain Saralegui, Tal Schuster, William~W. Cohen, Michael Collins, Dipanjan Das, Donald Metzler, Slav Petrov, and Kellie Webster.
\newblock Attributed question answering: Evaluation and modeling for attributed large language models.
\newblock \emph{arXiv preprint arXiv:2212.08037}, 2023.

\bibitem[Castillo and Davison(2011)]{castillo2011adversarial}
Carlos Castillo and Brian~D Davison.
\newblock Adversarial web search.
\newblock \emph{Foundations and trends in Information Retrieval}, 4\penalty0 (5):\penalty0 377--486, 2011.

\bibitem[Chen et~al.(2025{\natexlab{a}})Chen, Chen, Huang, Shao, Chen, Hua, Xu, Wu, Chuan, and Wu]{chen2025ccgseo}
Qiyuan Chen, Jiahe Chen, Hongsen Huang, Qian Shao, Jintai Chen, Renjie Hua, Hongxia Xu, Ruijia Wu, Ren Chuan, and Jian Wu.
\newblock Cc-gseo-bench: A content-centric benchmark for measuring source influence in generative search engines.
\newblock \emph{arXiv preprint arXiv:2509.05607}, 2025{\natexlab{a}}.

\bibitem[Chen et~al.(2025{\natexlab{b}})Chen, Bao, Wu, Chen, and Liao]{chen2025caption}
Xiaolu Chen, Jie Bao, Haojie Wu, Zhen Chen, and Yong Liao.
\newblock Caption injection for optimization in generative search engine.
\newblock \emph{arXiv preprint arXiv:2511.04080}, 2025{\natexlab{b}}.

\bibitem[Chen et~al.(2025{\natexlab{c}})Chen, Wu, Bao, Chen, Liao, and Huang]{chen2025role}
Xiaolu Chen, Haojie Wu, Jie Bao, Zhen Chen, Yong Liao, and Hu~Huang.
\newblock Role-augmented intent-driven generative search engine optimization.
\newblock \emph{arXiv preprint arXiv:2508.11158}, 2025{\natexlab{c}}.

\bibitem[Chen et~al.(2024)Chen, Xiang, Xiao, Song, and Li]{chen2024agentpoison}
Zhaorun Chen, Zhen Xiang, Chaowei Xiao, Dawn Song, and Bo~Li.
\newblock Agentpoison: Red-teaming llm agents via poisoning memory or knowledge bases.
\newblock In \emph{Advances in Neural Information Processing Systems 37 (NeurIPS)}, 2024.

\bibitem[Edemacu et~al.(2025)Edemacu, Shashidhar, Tuape, Abudu, Jang, and Kim]{edemacu2025filterrag}
Kennedy Edemacu, Vinay~M. Shashidhar, Micheal Tuape, Dan Abudu, Beakcheol Jang, and Jong~Wook Kim.
\newblock Defending against knowledge poisoning attacks during retrieval-augmented generation.
\newblock \emph{arXiv preprint arXiv:2508.02835}, 2025.

\bibitem[Es et~al.(2024)Es, James, Espinosa-Anke, and Schockaert]{es2023ragas}
Shahul Es, Jithin James, Luis Espinosa-Anke, and Steven Schockaert.
\newblock Ragas: Automated evaluation of retrieval augmented generation.
\newblock In \emph{Proceedings of the 18th Conference of the European Chapter of the Association for Computational Linguistics: System Demonstrations}, 2024.

\bibitem[Fetterly et~al.(2003)Fetterly, Manasse, Najork, and Wiener]{fetterly2003evolution}
Dennis Fetterly, Mark Manasse, Marc Najork, and Janet~L. Wiener.
\newblock A large-scale study of the evolution of web pages.
\newblock In \emph{Proceedings of the 12th International Conference on World Wide Web}, 2003.

\bibitem[Gao et~al.(2023)Gao, Yen, Yu, and Chen]{gao2023alce}
Tianyu Gao, Howard Yen, Jiatong Yu, and Danqi Chen.
\newblock Enabling large language models to generate text with citations.
\newblock In \emph{Proceedings of the 2023 Conference on Empirical Methods in Natural Language Processing (EMNLP)}, 2023.

\bibitem[Goren et~al.(2018)Goren, Kurland, Tennenholtz, and Raiber]{goren2018ranking}
Gregory Goren, Oren Kurland, Moshe Tennenholtz, and Fiana Raiber.
\newblock Ranking robustness under adversarial document manipulations.
\newblock In \emph{The 41st International ACM SIGIR Conference on Research \& Development in Information Retrieval}, pages 395--404, 2018.

\bibitem[Greshake et~al.(2023)Greshake, Abdelnabi, Mishra, Endres, Holz, and Fritz]{greshake2023indirect}
Kai Greshake, Sahar Abdelnabi, Shailesh Mishra, Christoph Endres, Thorsten Holz, and Mario Fritz.
\newblock Not what you've signed up for: Compromising real-world llm-integrated applications with indirect prompt injection.
\newblock In \emph{Proceedings of the 16th ACM Workshop on Artificial Intelligence and Security (AISec)}, 2023.

\bibitem[Gy{\"o}ngyi and Garcia-Molina(2005)]{gyongyi2005linkspam}
Zolt{\'a}n Gy{\"o}ngyi and Hector Garcia-Molina.
\newblock Link spam alliances.
\newblock In \emph{VLDB}, volume~5, pages 517--528, 2005.

\bibitem[Gy{\"{o}}ngyi and Garcia{-}Molina(2005)]{gyongyi2005taxonomy}
Zolt{\'{a}}n Gy{\"{o}}ngyi and Hector Garcia{-}Molina.
\newblock Web spam taxonomy.
\newblock In \emph{AIRWeb 2005, First International Workshop on Adversarial Information Retrieval on the Web, co-located with the {WWW} conference, Chiba, Japan, May 2005}, pages 39--47, 2005.

\bibitem[Hardt et~al.(2016)Hardt, Megiddo, Papadimitriou, and Wootters]{hardt2016strategic}
Moritz Hardt, Nimrod Megiddo, Christos Papadimitriou, and Mary Wootters.
\newblock Strategic classification.
\newblock In \emph{Proceedings of the 2016 ACM Conference on Innovations in Theoretical Computer Science (ITCS)}, pages 111--122, 2016.

\bibitem[Jagadeesan et~al.(2023)Jagadeesan, Garg, and Steinhardt]{jagadeesan2022supply}
Meena Jagadeesan, Nikhil Garg, and Jacob Steinhardt.
\newblock Supply-side equilibria in recommender systems.
\newblock In \emph{Advances in Neural Information Processing Systems 36 (NeurIPS)}, 2023.

\bibitem[Kim et~al.(2026)Kim, Jeong, Kim, Lee, and Lee]{kim2026sageo}
Sunghwan Kim, Wooseok Jeong, Serin Kim, Sangam Lee, and Dongha Lee.
\newblock Sageo arena: A realistic environment for evaluating search-augmented generative engine optimization.
\newblock \emph{arXiv preprint arXiv:2602.12187}, 2026.

\bibitem[Kleinberg and Raghavan(2019)]{kleinberg2019classifiers}
Jon Kleinberg and Manish Raghavan.
\newblock How do classifiers induce agents to invest effort strategically?
\newblock In \emph{Proceedings of the 2019 ACM Conference on Economics and Computation (EC)}, 2019.

\bibitem[Kumar and Lakkaraju(2024)]{kumar2024manipulating}
Aounon Kumar and Himabindu Lakkaraju.
\newblock Manipulating large language models to increase product visibility.
\newblock \emph{arXiv preprint arXiv:2404.07981}, 2024.

\bibitem[Lewis et~al.(2020)Lewis, Perez, Piktus, Petroni, Karpukhin, Goyal, K{\"u}ttler, Lewis, tau Yih, Rockt{\"a}schel, Riedel, and Kiela]{lewis2020retrieval}
Patrick Lewis, Ethan Perez, Aleksandra Piktus, Fabio Petroni, Vladimir Karpukhin, Naman Goyal, Heinrich K{\"u}ttler, Mike Lewis, Wen tau Yih, Tim Rockt{\"a}schel, Sebastian Riedel, and Douwe Kiela.
\newblock Retrieval-augmented generation for knowledge-intensive nlp tasks.
\newblock In \emph{Advances in Neural Information Processing Systems 33 (NeurIPS)}, 2020.

\bibitem[Liu et~al.(2023{\natexlab{a}})Liu, Zhang, and Liang]{liu-etal-2023-evaluating}
Nelson~F. Liu, Tianyi Zhang, and Percy Liang.
\newblock Evaluating verifiability in generative search engines.
\newblock In \emph{Findings of the Association for Computational Linguistics: EMNLP 2023}, 2023{\natexlab{a}}.

\bibitem[Liu et~al.(2023{\natexlab{b}})Liu, Deng, Li, Wang, Wang, Wang, Zhang, Liu, Wang, Zheng, Zhang, and Liu]{liu2023promptinjection}
Yi~Liu, Gelei Deng, Yuekang Li, Kailong Wang, Zihao Wang, Xiaofeng Wang, Tianwei Zhang, Yepang Liu, Haoyu Wang, Yan Zheng, Leo~Yu Zhang, and Yang Liu.
\newblock Prompt injection attack against llm-integrated applications.
\newblock \emph{arXiv preprint arXiv:2306.05499}, 2023{\natexlab{b}}.

\bibitem[Liu and Xu(2026)]{liu2026featgeo}
Zikang Liu and Peilan Xu.
\newblock Think before writing: Feature-level multi-objective optimization for generative citation visibility.
\newblock \emph{arXiv preprint arXiv:2604.19113}, 2026.

\bibitem[Marecki et~al.(2012)Marecki, Tesauro, and Segal]{marecki2012playing}
Janusz Marecki, Gerry Tesauro, and Richard Segal.
\newblock Playing repeated stackelberg games with unknown opponents.
\newblock In \emph{Proceedings of the 11th International Conference on Autonomous Agents and Multiagent Systems - Volume 2}, AAMAS '12, Richland, SC, 2012. International Foundation for Autonomous Agents and Multiagent Systems.

\bibitem[Min et~al.(2023)Min, Krishna, Lyu, Lewis, tau Yih, Koh, Iyyer, Zettlemoyer, and Hajishirzi]{min2023factscore}
Sewon Min, Kalpesh Krishna, Xinxi Lyu, Mike Lewis, Wen tau Yih, Pang~Wei Koh, Mohit Iyyer, Luke Zettlemoyer, and Hannaneh Hajishirzi.
\newblock Factscore: Fine-grained atomic evaluation of factual precision in long form text generation.
\newblock In \emph{Proceedings of the 2023 Conference on Empirical Methods in Natural Language Processing (EMNLP)}, 2023.

\bibitem[Nakano et~al.(2021)Nakano, Hilton, Balaji, Wu, Ouyang, Kim, Hesse, Jain, Kosaraju, Saunders, Jiang, Cobbe, Eloundou, Krueger, Button, Knight, Chess, and Schulman]{nakano2021webgpt}
Reiichiro Nakano, Jacob Hilton, Suchir Balaji, Jeff Wu, Long Ouyang, Christina Kim, Christopher Hesse, Shantanu Jain, Vineet Kosaraju, William Saunders, Xu~Jiang, Karl Cobbe, Tyna Eloundou, Gretchen Krueger, Kevin Button, Matthew Knight, Benjamin Chess, and John Schulman.
\newblock Webgpt: Browser-assisted question-answering with human feedback.
\newblock \emph{arXiv preprint arXiv:2112.09332}, 2021.

\bibitem[Nisan and Ronen(2001)]{nisan2001algorithmic}
Noam Nisan and Amir Ronen.
\newblock Algorithmic mechanism design.
\newblock \emph{Games and Economic Behavior}, 35\penalty0 (1--2):\penalty0 166--196, 2001.

\bibitem[Ntoulas et~al.(2004)Ntoulas, Cho, and Olston]{ntoulas2004new}
Alexandros Ntoulas, Junghoo Cho, and Christopher Olston.
\newblock What's new on the web? the evolution of the web from a search engine perspective.
\newblock In \emph{Proceedings of the 13th International Conference on World Wide Web}, 2004.

\bibitem[Ntoulas et~al.(2006)Ntoulas, Najork, Manasse, and Fetterly]{ntoulas2006detecting}
Alexandros Ntoulas, Marc Najork, Mark Manasse, and Dennis Fetterly.
\newblock Detecting spam web pages through content analysis.
\newblock In \emph{Proceedings of the 15th International Conference on World Wide Web}, WWW '06, page 83–92, 2006.

\bibitem[Oh et~al.(2024)Oh, Verma, and Kumar]{oh2024adversarial}
Sejoon Oh, Gaurav Verma, and Srijan Kumar.
\newblock Adversarial text rewriting for text-aware recommender systems.
\newblock In \emph{Proceedings of the 33rd ACM International Conference on Information and Knowledge Management}. ACM, 2024.

\bibitem[Perdomo et~al.(2020)Perdomo, Zrnic, Mendler-D{\"u}nner, and Hardt]{perdomo2020performative}
Juan~C. Perdomo, Tijana Zrnic, Celestine Mendler-D{\"u}nner, and Moritz Hardt.
\newblock Performative prediction.
\newblock In \emph{Proceedings of the 37th International Conference on Machine Learning (ICML)}, 2020.

\bibitem[Perez and Ribeiro(2022)]{perez2022ignore}
Fábio Perez and Ian Ribeiro.
\newblock Ignore previous prompt: Attack techniques for language models.
\newblock \emph{arXiv preprint arXiv:2211.09527}, 2022.

\bibitem[Puerto et~al.(2025)Puerto, Gubri, Green, Oh, and Yun]{puerto2025cseo}
Haritz Puerto, Martin Gubri, Tommaso Green, Seong~Joon Oh, and Sangdoo Yun.
\newblock C-seo bench: Does conversational seo work?
\newblock \emph{arXiv preprint arXiv:2506.11097}, 2025.

\bibitem[Su et~al.(2024)Su, Zhou, Zhang, Nakov, and Cardie]{su2024robustrag}
Jinyan Su, Jin~Peng Zhou, Zhengxin Zhang, Preslav Nakov, and Claire Cardie.
\newblock Towards more robust retrieval-augmented generation: Evaluating rag under adversarial poisoning attacks.
\newblock \emph{arXiv preprint arXiv:2412.16708}, 2024.

\bibitem[Von~Stackelberg(2010)]{stackelberg2011market}
Heinrich Von~Stackelberg.
\newblock \emph{Market structure and equilibrium}.
\newblock Springer Science \& Business Media, 2010.

\bibitem[Wallace et~al.(2024)Wallace, Xiao, Leike, Weng, Heidecke, and Beutel]{wallace2024instruction}
Eric Wallace, Kai Xiao, Reimar Leike, Lilian Weng, Johannes Heidecke, and Alex Beutel.
\newblock The instruction hierarchy: Training llms to prioritize privileged instructions.
\newblock \emph{arXiv preprint arXiv:2404.13208}, 2024.

\bibitem[Wu et~al.(2026)Wu, Mao, Lin, Yang, Lu, Guo, Zhang, Wu, Huang, and Li]{wu2026mageo}
Beining Wu, Fuyou Mao, Jiong Lin, Cheng Yang, Jiaxuan Lu, Yifu Guo, Siyu Zhang, Yifan Wu, Ying Huang, and Fu~Li.
\newblock From experience to skill: Multi-agent generative engine optimization via reusable strategy learning.
\newblock \emph{arXiv preprint arXiv:2604.19516}, 2026.

\bibitem[Wu et~al.(2025)Wu, Zhong, Kim, and Xiong]{wu2025generative}
Yujiang Wu, Shanshan Zhong, Yubin Kim, and Chenyan Xiong.
\newblock What generative search engines like and how to optimize web content cooperatively.
\newblock \emph{arXiv preprint arXiv:2510.11438}, 2025.

\bibitem[Xiang et~al.(2024)Xiang, Wu, Zhong, Wagner, Chen, and Mittal]{xiang2024robustrag}
Chong Xiang, Tong Wu, Zexuan Zhong, David Wagner, Danqi Chen, and Prateek Mittal.
\newblock Certifiably robust rag against retrieval corruption.
\newblock \emph{arXiv preprint arXiv:2405.15556}, 2024.

\bibitem[Xue et~al.(2024)Xue, Zheng, Hu, Liu, Chen, and Lou]{xue2024badrag}
Jiaqi Xue, Mengxin Zheng, Yebowen Hu, Fei Liu, Xun Chen, and Qian Lou.
\newblock Badrag: Identifying vulnerabilities in retrieval augmented generation of large language models.
\newblock \emph{arXiv preprint arXiv:2406.00083}, 2024.

\bibitem[Yu et~al.(2026)Yu, Yang, Ding, and Sato]{yu2026structural}
Junwei Yu, Mufeng Yang, Yepeng Ding, and Hiroyuki Sato.
\newblock Structural feature engineering for generative engine optimization: How content structure shapes citation behavior.
\newblock \emph{arXiv preprint arXiv:2603.29979}, 2026.

\bibitem[Yuan et~al.(2026)Yuan, Wang, Wang, Sun, Wang, and Li]{yuan2026agenticgeo}
Jiaqi Yuan, Jialu Wang, Zihan Wang, Qingyun Sun, Ruijie Wang, and Jianxin Li.
\newblock Agenticgeo: A self-evolving agentic system for generative engine optimization.
\newblock \emph{arXiv preprint arXiv:2603.20213}, 2026.

\bibitem[Zhong et~al.(2023)Zhong, Huang, Wettig, and Chen]{zhong2023poisoning}
Zexuan Zhong, Ziqing Huang, Alexander Wettig, and Danqi Chen.
\newblock Poisoning retrieval corpora by injecting adversarial passages.
\newblock In \emph{Proceedings of the 2023 Conference on Empirical Methods in Natural Language Processing (EMNLP)}, 2023.

\bibitem[Zhou et~al.(2026)Zhou, Chen, Chen, Bao, Chen, and Liao]{zhou2026ifgeo}
Heyang Zhou, JiaJia Chen, Xiaolu Chen, Jie Bao, Zhen Chen, and Yong Liao.
\newblock If-geo: Conflict-aware instruction fusion for multi-query generative engine optimization.
\newblock \emph{arXiv preprint arXiv:2601.13938}, 2026.

\bibitem[Zou et~al.(2025)Zou, Geng, Wang, and Jia]{zou2024poisonedrag}
Wei Zou, Runpeng Geng, Binghui Wang, and Jinyuan Jia.
\newblock $\{$PoisonedRAG$\}$: Knowledge corruption attacks to $\{$Retrieval-Augmented$\}$ generation of large language models.
\newblock In \emph{34th USENIX Security Symposium (USENIX Security 25)}, pages 3827--3844, 2025.

\end{thebibliography}
\clearpage

\appendix

\section{Limitations and Open Directions}
\label{sec:limitations}

\VCR\ makes several simplifying assumptions. First, the theory analyzes a
local quadratic surrogate and the experiments use a finite interaction
horizon; neither establishes a global equilibrium for unrestricted rewriting
strategies. The model also considers one platform and one creator population,
leaving multi-platform competition and paid placement to future work.

Second, the reward checks a rewrite against its earlier version. It measures
source support and factual salience, not independent truth: an inaccurate
statement already present in the earlier version may still receive credit.
Moreover, LLM-based claim counting and rule matching can err, and suppliers
may repeat or split supported claims to approach the reward cap. The cap and
manipulation penalty reduce, but do not eliminate, these risks.

Finally, the reported utilities represent creator exposure and a joint
platform/user objective, with equal weight in Net. The $\pm5$-point creator
equivalence band is an operational tolerance motivated by prior repeated-run
variation, not a formal equivalence test for each current estimate. We report
paired query-level bootstrap intervals for the default-engine results;
additional independent pipeline runs would quantify model- and
trajectory-level variability beyond this query-sampling uncertainty.

\section{Implementation Details}
\label{app:implementation}

\paragraph{Hardware and runtime.}
All experiments are run on a SLURM cluster. Each job uses $8$ CPU cores
and $64$\,GB RAM; no GPU is used. A single five-round game on the full
$1{,}000$-query test split takes $18$--$24$ wall-clock hours on
\textsc{GEO-Bench} and \textsc{Researchy-GEO}, and $12$--$15$ hours on
\textsc{E-commerce}. Smaller diagnostic runs and ablations typically
take $4$--$6$ hours.
VCR adds one cached, JSON-only oracle call per changed pair to the rule
extraction shared with Prompt defense. In an online latency test, all four
defenses complete answer-time processing in approximately $400$ ms; the
pair-level oracle runs off the answer-time path.

\paragraph{Models.}
The answer-generation engine $G$ is \texttt{gemini-2.5-flash-lite}.
The platform-side rule extractor
(\textsc{Explainer}/\textsc{Extractor}/\textsc{Merger}/\textsc{Filter}),
the change-magnitude judge, and the verifiable-content oracle all use
\texttt{gpt-4o-mini} with temperature $0$ and JSON-only decoding when
applicable. We pre-compute and cache quality-judge scores for each
dataset so that target selection is deterministic across runs.

\paragraph{Game configuration.}
Each repeated game runs for $T=5$ rounds. To make trajectories
comparable across defenses, we disable early stopping in all reported
experiments; the supplier and platform always interact for all five
rounds. The change-magnitude threshold $\theta$ is set so that the
median edited document contributes to the rule set in round $R_1$.
Unless otherwise stated, the reward strength is $\lambda=1$. The
per-claim credit $c_n$ and credit cap $c_{\max}$ are calibrated to be on
the same scale as the weakest and strongest single suspicion penalties
in the rule pipeline. The $\lambda$ sweep in
Fig.~\ref{fig:robustness-analysis}(b) covers the same effective range as
multiplying the per-claim credit by an equivalent factor.

\section{Defense and Attacker Baselines}
\label{app:baselines-impl}

\paragraph{Defense baselines.}
We compare against three classical platform-side defenses.
\textbf{Prompt defense} attaches a per-document suspicion label, softly
reorders sources by ascending suspicion, and inserts the warning
template in App.~\ref{app:platform-prompts} into the engine's system
prompt. No document is removed. \textbf{Hard reject} drops any document
whose suspicion score exceeds $\tau=0.45$ before generation; the
threshold is tuned on a $50$-query development split. \textbf{Keyword
scrub} disables all prompt-defense channels and instead applies a regex
filter over $24$ GEO-style phrases, such as ``optimized for GEO'',
``expert-curated'', and ``machine-readable listing'', to every incoming
document. The full phrase list is released with our code.

\paragraph{GEO attacker baselines.}
In Sec.~\ref{subsec:other-attackers}, we evaluate robustness against
four non-\AutoGEO{} attackers. {\scshape RAID}~\citep{chen2025role}
extracts engine-preference rules from contrastive role-conditioned
answers; we use the authors' released codebase with the engine replaced
by ours. {\scshape IF-GEO}~\citep{zhou2026ifgeo} performs
influence-based GEO; we re-implement its influence proxy on top of our
engine. \textsc{SAGEO}~\citep{kim2026sageo} performs structure-aware GEO
via prompt programming; we use the authors' prompts with the top-$5$
candidate set from our retriever. Statistics
Addition~\citep{aggarwal2024geo} adds quantitative claims following the
original paper.

\section{Evaluation Metrics}
\label{app:metrics}

We adopt the AutoGEO framework of~\citet{wu2025generative} and the
visibility metric of~\citet{aggarwal2024geo}. For a query $q$, the
engine retrieves a candidate set $D_q$, and an LLM $G$ produces an
answer $a=G(q,D_q)$. Each candidate document $d\in D_q$ receives a
citation visibility score
\begin{equation}
    \mathrm{Vis}(d,a)
    =
    \mathrm{Word}(d,a)
    +
    \mathrm{Pos}(d,a)
    +
    \mathrm{Overall}(d,a),
    \label{eq:vis-app}
\end{equation}
where $\mathrm{Word}(d,a)$ is the normalized word count of sentences in
$a$ citing $d$, $\mathrm{Pos}(d,a)$ is the location-based weight of the
source-linked text, and $\mathrm{Overall}(d,a)$ integrates the two
signals. We follow~\citet{aggarwal2024geo} for the exact
implementation.

Given the supplier's target set $T(q)\subseteq D_q$, the target GEO
score for query $q$ is
\begin{equation}
    g(q)
    =
    \sum_{d\in T(q)} \mathrm{Vis}(d,a),
    \qquad
    g
    =
    \mathbb{E}_q[g(q)].
    \label{eq:gT-app}
\end{equation}
A higher $g$ means the target documents are cited more prominently in
the engine's answer.

\section{Additional Experiments and Case Studies}
\label{app:additional-experiments}

\subsection{Sensitivity to the Net Weighting}
\label{app:net-omega}

Net weights the two stakeholder utilities equally. To check that the
defense ranking does not depend on this choice, define
\begin{equation}
\mathrm{Net}_\omega=\omega\,\mathrm{Def}+(1-\omega)\,\mathrm{Welf},
\qquad \omega\in[0,1],
\end{equation}
so that $\omega=0.5$ recovers the main metric up to scale.
Figure~\ref{fig:net-omega}(a) shows the Pareto view of the four defenses
across the three datasets: Hard reject lies on the $\mathrm{Net}=0$
diagonal, meaning it buys defense one-for-one with creator exposure;
Prompt and Keyword scrub cluster at the origin; and VCR is the only
defense inside the win-win region on every dataset.
Figure~\ref{fig:net-omega}(b) sweeps $\omega$ on \textsc{E-commerce}.
VCR is the optimal defense for all $\omega<0.60$ on
\textsc{E-commerce}, for $\omega\in(0.24,0.57)$ on \textsc{GEO-Bench},
and for $\omega\in(0.20,0.53)$ on \textsc{Researchy-GEO}. Hard reject
overtakes only when creator utility is nearly ignored
($\omega\gtrsim0.6$), and the near-inert Keyword scrub wins only when
defense is nearly ignored.

In practice, $\omega>0.6$ describes a short-sighted platform objective that
values one point of immediate suppression more than $1.5$ points of creator
exposure. At the Hard-reject operating point this removes roughly half of
creators' natural exposure (Welf.\ $\approx-50$ on \textsc{E-commerce}): the
current answer is protected, but the citation traffic that motivates future
content supply is lost. At the opposite extreme, Keyword scrub wins only when
the platform nearly ignores defense. Thus, the alternatives overtake VCR in
one-sided regimes that omit one of the two ecosystem objectives, rather than
under the two-sided mechanism-design setting studied here.

\begin{figure}[t]
\centering
\includegraphics[width=0.9\linewidth]{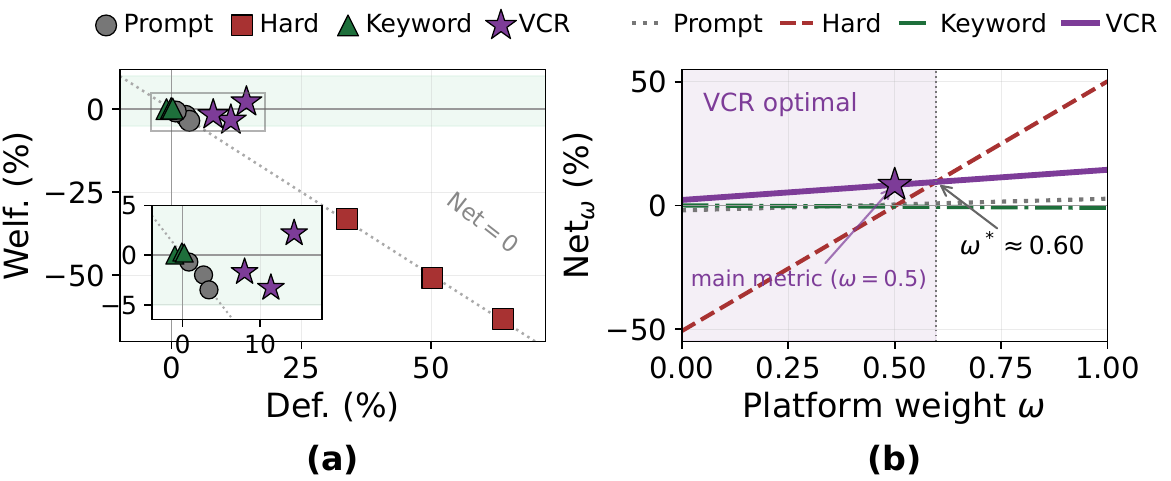}
\caption{Sensitivity of the defense ranking to the Net weighting.
\textbf{(a)} Pareto view of (Def., Welf.) for all defenses and datasets;
the shaded band is the win-win region (positive Def.\ with creator
utility within the $\pm5$-point equivalence band or better).
\textbf{(b)} $\mathrm{Net}_\omega$ on \textsc{E-commerce}: VCR is optimal
for every platform weight $\omega$ below ${\approx}0.60$.}
\label{fig:net-omega}
\end{figure}

\subsection{Statistical Uncertainty}
\label{app:uncertainty}

To quantify the statistical uncertainty of the reported utilities, we
compute paired per-query bootstrap confidence intervals at round $5$. For
each run we pair the three per-query target GEO scores that enter the
metrics (no-attack, attacked, and defended), resample queries with
replacement ($10^4$ replicates), and recompute Def., Welf., and Net on
each replicate; Table~\ref{tab:bootstrap-ci} reports $95\%$ percentile
intervals for the full-scale main runs behind the Gemini block of
Table~\ref{tab:main}.

The default-engine conclusions are well separated from query-sampling
uncertainty. On every dataset, VCR's Net interval lies entirely
above the intervals of all three classical defenses, which all straddle
or hug zero. For creator utility, the \textsc{GEO-Bench} and
\textsc{Researchy-GEO} intervals are fully contained in the $\pm5$-point
band, while the \textsc{E-commerce} interval closely tracks it and reaches
$5.4$ at its upper endpoint. We therefore treat the band as an empirical
operational criterion rather than claim a formal equivalence test.

\begin{table}[t]
\centering
\small
\caption{Paired per-query bootstrap $95\%$ confidence intervals at round
$5$ for the full-scale main runs (percentage points).}
\label{tab:bootstrap-ci}
\setlength{\tabcolsep}{3.5pt}
\begin{tabular}{@{}llrrr@{}}
\toprule
Dataset & Defense & Def. [CI] & Welf. [CI] & Net [CI]\\
\midrule
\textsc{E-commerce} & Prompt & $2.7$ $[0.2, 5.1]$ & $-2.0$ $[-4.6, 0.6]$ & $0.7$ $[-1.5, 2.8]$\\
 & Hard reject & $50.3$ $[40.0, 59.8]$ & $-50.8$ $[-60.3, -40.9]$ & $-0.6$ $[-2.0, 0.7]$\\
 & Keyword scrub & $-1.0$ $[-2.7, 0.6]$ & $0.0$ $[-1.9, 1.9]$ & $-1.0$ $[-2.9, 0.9]$\\
 & \textbf{VCR} & $14.4$ $[12.0, 16.8]$ & $2.2$ $[-0.9, 5.4]$ & $\mathbf{16.6}$ $[13.7, 19.6]$\\
\midrule
\textsc{GEO-Bench} & Prompt & $3.4$ $[2.2, 4.6]$ & $-3.5$ $[-4.7, -2.2]$ & $-0.1$ $[-1.1, 1.0]$\\
 & Hard reject & $33.8$ $[26.7, 40.7]$ & $-33.2$ $[-40.1, -25.9]$ & $0.6$ $[-0.1, 1.4]$\\
 & Keyword scrub & $-0.1$ $[-1.0, 0.7]$ & $0.3$ $[-0.7, 1.2]$ & $0.1$ $[-0.8, 1.1]$\\
 & \textbf{VCR} & $11.4$ $[10.1, 12.7]$ & $-3.3$ $[-4.7, -2.1]$ & $\mathbf{8.0}$ $[6.8, 9.3]$\\
\midrule
\textsc{Researchy-GEO} & Prompt & $0.8$ $[-0.2, 1.9]$ & $-0.7$ $[-1.8, 0.3]$ & $0.1$ $[-0.8, 1.1]$\\
 & Hard reject & $63.9$ $[58.3, 69.2]$ & $-63.3$ $[-68.8, -57.8]$ & $0.5$ $[0.2, 0.9]$\\
 & Keyword scrub & $0.2$ $[-0.6, 0.9]$ & $0.2$ $[-0.6, 1.0]$ & $0.3$ $[-0.5, 1.2]$\\
 & \textbf{VCR} & $8.0$ $[6.9, 9.0]$ & $-1.7$ $[-2.8, -0.5]$ & $\mathbf{6.3}$ $[5.3, 7.3]$\\
\bottomrule
\end{tabular}
\end{table}

\subsection{Judge Robustness}
\label{app:judge-robustness}

To study whether the verifiable-content reward depends on the specific
pair-level LLM oracle, we take the same two pools of round-5 rewrites, one
produced under Prompt defense and one under VCR, and re-score every pair
with three independent judge models from three different providers
(Table~\ref{tab:judge-robustness}). If the reward reflected the
idiosyncrasies of one judge rather than the substance of the rewrites,
gaming the judge would amount to gaming the mechanism; what the mechanism
actually requires is only that the ordering be stable across judges.

From the results, we can see that the reward signal is robust to the
choice of judge. The three judges differ in how conservatively they score,
so the absolute reward levels shift from judge to judge, but every judge
assigns clearly more verifiable-content reward to the VCR-arm rewrites
than to the Prompt-arm rewrites. The signal that drives the mechanism is
therefore a property of the rewrites themselves rather than of one
particular oracle, and the residual disagreement between judges is in line
with the moderate cross-judge agreement reported in prior GEO evaluations
\citep{aggarwal2024geo,wu2025generative}.

\begin{table}[t]
\centering
\caption{Verifiable-content reward assigned by three independent judges.}
\label{tab:judge-robustness}
\begin{tabular}{@{}lrrr@{}}
\toprule
Rewrites from & GPT-4o-mini & Claude-Haiku-4.5 & Gemini-2.5-Flash-Lite\\
\midrule
Prompt defense & 0.213 & 0.180 & 0.244\\
\textbf{VCR} & \textbf{0.243} & \textbf{0.194} & \textbf{0.273}\\
\bottomrule
\end{tabular}
\end{table}

\subsection{Multi-Turn Search}
\label{app:multi-turn}

To study whether VCR's advantage survives beyond the single
retrieval-generation pass of the main experiments, we extend the
environment to a multi-turn search session: the engine issues several
search steps for one user question, accumulates evidence across steps, and
only then composes the answer, with the defense applied at every retrieval
step. This setting changes the exposure economics, since a manipulated
document has several chances to enter the context and an over-aggressive
defense has several chances to drop useful evidence. We compare an
undefended attacked session, Prompt defense, and VCR
(Table~\ref{tab:multi-turn}).

From the results, we can see that the single-pass advantage carries over.
VCR attains the best citation precision and the best user-side answer
utility of the three conditions, while its recall stays essentially
perfect, so the added filtering does not cost the session useful evidence.
The undefended attacked session reaches higher raw visibility for the
target documents but pays for it with lower precision and weaker answers,
which is the multi-turn analogue of the exploitation pattern in the
single-pass game.

\begin{table}[t]
\centering
\caption{Results in the multi-turn search setting.}
\label{tab:multi-turn}
\begin{tabular}{@{}lrrrrr@{}}
\toprule
Condition & GEO score & Precision & Recall & Support & GEU average\\
\midrule
Prompt defense & 0.1426 & 0.6968 & 0.9900 & 0.5620 & 0.5777\\
AutoGEO & 0.1990 & 0.6920 & 0.9850 & 0.5260 & 0.5836\\
\textbf{VCR} & \textbf{0.1924} & \textbf{0.7207} & 0.9894 & 0.5600 & \textbf{0.5998}\\
\bottomrule
\end{tabular}
\end{table}

\subsection{Extended Convergence Check}
\label{app:extended-convergence}

To study whether the fixed five-round horizon of the main experiments
truncates the interaction too early, we rerun the full game without the
fixed horizon and let it continue until its built-in stopping check
terminates it, several rounds past the main evaluation point
(Fig.~\ref{fig:extended-convergence}). If the supplier-platform dynamics
were still moving at round five, endpoint comparisons could be artifacts
of where we stop rather than properties of the defenses.

From the results, we can see that the five-round endpoint is a
representative snapshot of the stabilized interaction. Both the attacked
and the defended target GEO scores flatten before the main horizon and
remain essentially unchanged through the additional rounds, with no late
drift or reversal. The endpoints used in the main tables therefore
reflect stabilized behavior of the repeated game, although we do not
claim that any five-round simulation constitutes a general real-world
equilibrium.

\begin{figure}[t]
\centering
\includegraphics[width=.72\linewidth]{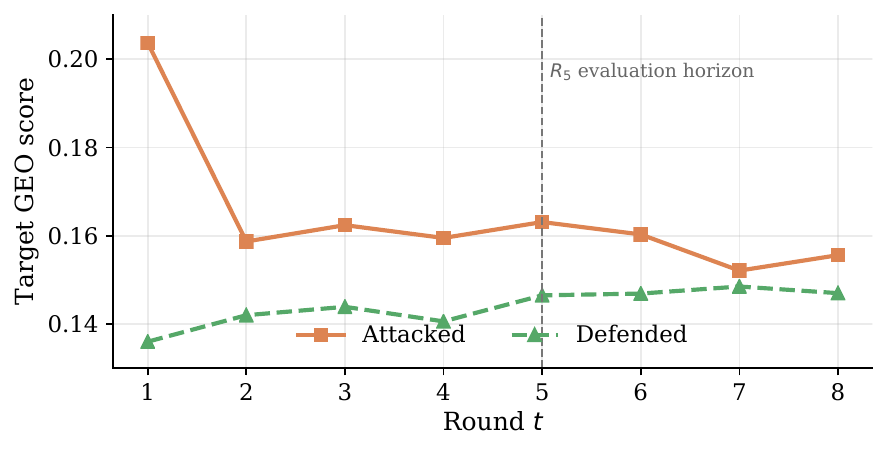}
\caption{Eight-round convergence check. The dashed line marks the main
evaluation horizon; attacked and defended target GEO scores remain stable
thereafter.}
\label{fig:extended-convergence}
\end{figure}

\subsection{Round-by-Round Dynamics across all datasets}
\label{app:trajectory-full}

Figure~\ref{fig:trajectory} extends the main-text trajectory on
\textsc{E-commerce} to all three benchmarks. \VCR{} consistently maintains
positive Net across rounds and datasets. Prompt defense decays by the
second round as the attacker adapts. Keyword scrub stays
near zero throughout, while Hard reject oscillates around zero because
its defense gains are offset by welfare losses. These dynamics match
the theory: classical defenses depend on manipulation signals that can
be eroded through repeated adaptation, whereas \VCR{} rewards a
verifiable-content direction that is welfare-aligned and harder to
neutralize by surface rewriting.

\begin{figure}[t]
\centering
\includegraphics[width=\linewidth]{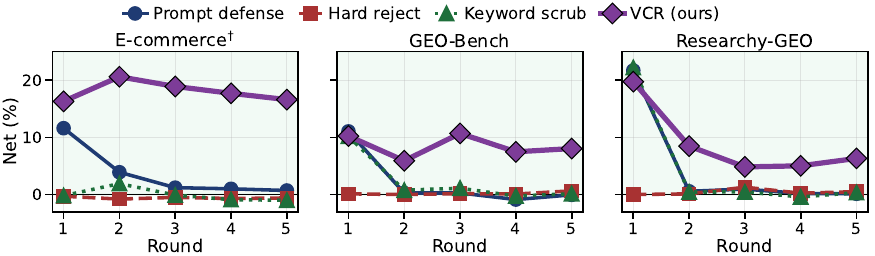}
\caption{
Round-by-round Net performance over five GEO--platform interaction
rounds across all three benchmarks. \VCR{} consistently maintains positive
Net across datasets.
}
\label{fig:trajectory}
\end{figure}

\subsection{User-Side Utility (GEU)}
\label{app:geu-ecomm}

We complement these metrics with the user-side GEU score (Generative-Engine Utility), following the answer-quality evaluation protocol of \citet{aggarwal2024geo}. GEU is computed by \texttt{gpt-4o-mini} on the engine's final answer and directly measures whether the defense improves the end user's experience. Tab.~\ref{tab:geu-ecomm} reports two aggregate summaries.

Tab.~\ref{tab:geu-ecomm} further reports per-round GEU on three substance dimensions: Clarity, Depth, and Insightfulness, comparing \VCR{} with the standard baseline pipeline. The results show three clear trends. First, the two defenses perform similarly in the first round, suggesting that the later gap is unlikely to be caused by initial-condition artifacts. Second, as the interaction proceeds, \VCR{} consistently improves over the baseline across all three dimensions, indicating that the multi-round mechanism gradually translates verifiable-content incentives into better user-facing answers. Third, the improvements are stable across different substance dimensions rather than concentrated in a single metric. Overall, these results suggest that \VCR{} not only improves platform-side outcomes, as shown in Tab.~\ref{tab:main}, but also produces more informative and useful answers for end users.

\begin{table}[t]
\centering
\caption{Per-round GEU substance dimensions on \textsc{E-commerce}. \VCR{} consistently improves Clarity, Depth, and Insightfulness after the first round.}
\label{tab:geu-ecomm}
\small
\setlength{\tabcolsep}{4.5pt}
\renewcommand{\arraystretch}{1.10}
\begin{tabular}{@{}llcccccc@{}}
\toprule
Dim & Defense & $R_1$ & $R_2$ & $R_3$ & $R_4$ & $R_5$ & Avg ($R_1\!\to\!R_5$) \\
\midrule
Clarity        & Baseline              & $0.551$ & $0.551$ & $0.548$ & $0.549$ & $0.549$ & $0.550$ \\
               & \textbf{\VCR\ (ours)} & $0.548$ & \VCRtag{$\mathbf{0.568}$} & \VCRtag{$\mathbf{0.563}$} & \VCRtag{$\mathbf{0.564}$} & \VCRtag{$\mathbf{0.564}$} & \VCRtag{$\mathbf{0.562}$} \\
\cmidrule(l){1-8}
Depth          & Baseline              & $0.503$ & $0.514$ & $0.516$ & $0.517$ & $0.513$ & $0.513$ \\
               & \textbf{\VCR\ (ours)} & $0.500$ & \VCRtag{$\mathbf{0.526}$} & \VCRtag{$\mathbf{0.522}$} & \VCRtag{$\mathbf{0.520}$} & \VCRtag{$\mathbf{0.525}$} & \VCRtag{$\mathbf{0.518}$} \\
\cmidrule(l){1-8}
Insightfulness & Baseline              & $0.450$ & $0.457$ & $0.460$ & $0.456$ & $0.457$ & $0.456$ \\
               & \textbf{\VCR\ (ours)} & $0.450$ & \VCRtag{$\mathbf{0.474}$} & \VCRtag{$\mathbf{0.470}$} & \VCRtag{$\mathbf{0.467}$} & \VCRtag{$\mathbf{0.471}$} & \VCRtag{$\mathbf{0.466}$} \\
\bottomrule
\end{tabular}
\end{table}

\subsection{Full Rule-Alignment Trajectories}
\label{app:rule-alignment}

Figure~\ref{fig:rule-alignment} tracks the alignment between the
platform's inferred rules and the attacker's actual rewriting strategy
over rounds. With the full VCR pipeline, the inferred rules stay aligned
with attacker behavior as the attacker adapts; removing the VCR stage
causes the extractor to drift toward generic quality patterns, showing
that the reward-based filtering is what keeps the defense locked onto
manipulation-specific signals.

Table~\ref{tab:rule-alignment} reports the per-round rule-alignment
score $\langle r_t,\hat r_t\rangle$ between the attacker's actual rule
set and the platform's extracted suspicion rules. We compare
\textsc{high3} and \textsc{low3} target-quality buckets, with and
without the \VCR{} \textsc{Filter} stage. Across both buckets,
alignment improves when \textsc{Filter} is used and degrades when it is
removed. This supports the role of \textsc{Filter}: it removes generic
quality clauses that would match honest content and concentrates the
platform rule set on attacker-specific GEO axes.
Figure~\ref{fig:rule-alignment} plots the \textsc{high3} trajectory.

\begin{figure}[t]
\centering
\includegraphics[width=0.55\linewidth]{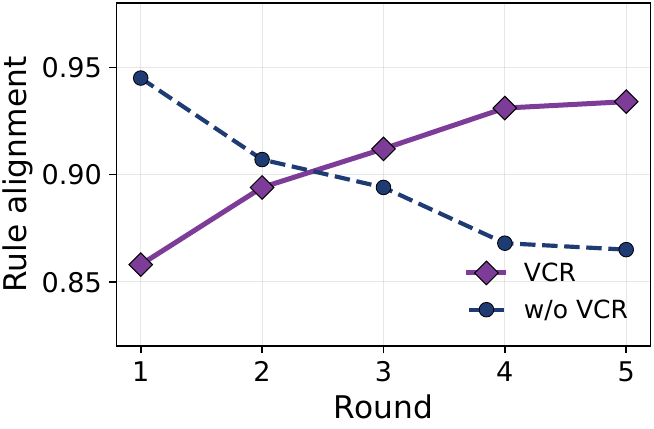}
\caption{Alignment between platform-inferred rules and attacker strategy
over rounds, with and without VCR filtering (\textsc{high3} bucket).}
\label{fig:rule-alignment}
\end{figure}

\begin{table}[t]
\centering
\small
\setlength{\tabcolsep}{4pt}
\begin{tabular}{@{}lcccc@{}}
\toprule
& \multicolumn{2}{c}{\textsc{high3}}
& \multicolumn{2}{c}{\textsc{low3}} \\
\cmidrule(lr){2-3}\cmidrule(lr){4-5}
Round & w/o Filter & \VCR{} full & w/o Filter & \VCR{} full \\
\midrule
$R_1$ & $0.945$ & $0.858$ & $0.958$ & $0.892$ \\
$R_2$ & $0.907$ & $0.894$ & $0.900$ & $0.903$ \\
$R_3$ & $0.894$ & $0.912$ & $0.955$ & $0.926$ \\
$R_4$ & $0.868$ & $0.931$ & $0.835$ & $0.808$ \\
$R_5$ & $0.865$ & $\mathbf{0.934}$ & $0.855$ & $\mathbf{0.943}$ \\
\bottomrule
\end{tabular}
\caption{
Round-by-round rule alignment $\langle r_t,\hat r_t\rangle$ between
attacker rules and platform suspicion rules. With the \textsc{Filter}
stage, alignment increases over rounds; without it, alignment drifts
downward.
}
\label{tab:rule-alignment}
\end{table}

\subsection{Case Study: Platform Rule Refinement under \VCR}
\label{app:case-rules}

We inspect how the platform's suspicion rules evolve across rounds.
Table~\ref{tab:case-rules} compares the top rules extracted at $R_1$
and $R_5$ under the full \VCR{} pipeline. At $R_1$, the extracted rules
mostly reflect generic writing principles, such as relevance, concision,
and structure. These rules can also match honest content and therefore
risk false positives. By $R_5$, the rules become more attacker-specific:
they emphasize negative-constraint coverage, atomic quotability,
explicit GEO signaling, and provenance attribution. This refinement is
consistent with the increasing rule alignment in
Fig.~\ref{fig:rule-alignment}.

\begin{table}[t]
\centering
\scriptsize
\renewcommand{\arraystretch}{1.15}
\setlength{\tabcolsep}{4pt}
\caption{
Platform-side suspicion rules under full \VCR{} at $R_1$ and $R_5$.
Early rules are often generic writing-quality heuristics, while later
rules become more specific to attacker behavior. Rules are truncated
for display.
}
\begin{tabular}{@{}p{0.10\linewidth}p{0.07\linewidth}p{0.78\linewidth}@{}}
\toprule
Dataset & Round & Top-$3$ platform suspicion rules $\hat r_t$ under full \VCR{} \\
\midrule
\textsc{E-commerce} & $R_1$
& \GenericRule{Content must be highly relevant and directly address the user's query, avoiding extraneous, promotional, or conversational filler.} \\
& & \GenericRule{Content should be exceptionally well-structured for AI consumption, using clear headings, lists, and highlighted key terms.} \\
& & \GenericRule{Documents demonstrating explicit optimization for AI consumption and GEO principles for structured extraction are preferred.} \\
\cmidrule(l){2-3}
& $R_5$
& \AttackerRule{Directly address all explicit and implicit aspects of the query, including constraints and negative constraints.} \\
& & \AttackerRule{Ensure core query components are explicitly identified and prominently featured.} \\
& & \AttackerRule{Prioritize factual, specific, quantifiable details over narrative descriptions or extraneous meta-commentary.} \\
\midrule
\textsc{GEO-Bench} & $R_1$
& \GenericRule{Content must be highly relevant and directly address the user's query, minimizing extraneous or tangential information.} \\
& & \GenericRule{Content should be concise and efficient, without unnecessary verbosity, jargon, or conversational filler.} \\
& & \GenericRule{Content should be optimized for machine readability and AI consumption, adhering to GEO principles.} \\
\cmidrule(l){2-3}
& $R_5$
& \AttackerRule{Clearly differentiate and compare related concepts, address potential misconceptions, and offer disambiguating information.} \\
& & \AttackerRule{Ensure content is atomic and segmentable, optimized for direct quotation and synthesis by generative AI systems.} \\
& & \AttackerRule{Explicitly signal document optimization for AI consumption and GEO principles through structure, phrasing, and relevance.} \\
\midrule
\textsc{Researchy} & $R_1$
& \GenericRule{Content must be accurate, credible, and verifiable, supported by data, examples, and citations to reputable sources.} \\
& & \GenericRule{Content must be highly relevant to the query, with key information presented early and prominently.} \\
& & \GenericRule{Content must be presented in a structured, scannable format with descriptive headings and hierarchical organization.} \\
\cmidrule(l){2-3}
& $R_5$
& \AttackerRule{Content should be presented in an atomic, easily isolatable format, with direct, assertive, and quotable statements for streamlined AI synthesis.} \\
& & \AttackerRule{Provide specific, actionable details, concrete examples, and clear ``how/why'' explanations rather than general statements.} \\
& & \AttackerRule{Information must be factual, accurate, specific, and verifiable, including precise data points, dates, names, and figures.} \\
\bottomrule
\end{tabular}
\label{tab:case-rules}
\end{table}

\subsection{Case Study: Cross-Attacker Rule Convergence under \VCR}
\label{app:cross-attacker-rules}

\AutoGEO{} is already analyzed in Tab.~\ref{tab:case-study-rules}; here
we show the remaining four attackers. We inspect why \VCR{} maintains
positive Net across the four non-\AutoGEO{} attackers in
Fig.~\ref{fig:robust-trajectory}(a). For each attacker, we take the top
platform rule extracted at $R_5$ on \textsc{E-commerce}, under Prompt
defense and under \VCR. The $R_1$ rules are generic writing-quality
clauses for all attackers (cf.~Tab.~\ref{tab:case-rules}, $R_1$ rows)
and are omitted.

\begin{table}[t]
\centering
\scriptsize
\renewcommand{\arraystretch}{1.18}
\setlength{\tabcolsep}{4pt}
\caption{
Top platform rule at $R_5$ on \textsc{E-commerce} for the four
non-\AutoGEO{} attackers. Under Prompt defense, the rule tracks each
attacker's specific manipulation pattern. Under \VCR, the rule of all
four attackers converges onto a common verifiability axis. Rules are
taken from \texttt{platform/merged\_rules.json} and lightly truncated.
}
\begin{tabular}{@{}p{0.13\linewidth}p{0.39\linewidth}p{0.39\linewidth}@{}}
\toprule
Attacker & Top platform rule under \textbf{Prompt defense} ($R_5$) & Top platform rule under \textbf{\VCR{}} ($R_5$) \\
\midrule
\textsc{RAID}
& \GenericRule{Content must directly address the user's query, prioritizing} \AttackerRule{actionable advice, clear recommendations, and specific value propositions} \GenericRule{over meta-analysis or tangential details.}
& \GenericRule{The document should demonstrate} \AttackerRule{authority and trustworthiness through accurate, up-to-date, specific, and quantifiable information, potentially including sourcing, comparative data, and concrete examples,} \GenericRule{framed with confident and definitive language.}
\\
\midrule
\textsc{SAGEO}
& \GenericRule{Content should be organized into} \AttackerRule{self-contained, logically segmented blocks or sections} \GenericRule{that are easily parsable and directly usable by language models.}
& \GenericRule{The document should be} \AttackerRule{self-contained, offering sufficient core information without immediate reliance on external links} \GenericRule{for basic details.}
\\
\midrule
Statistics Addition
& \GenericRule{The document must present} \AttackerRule{accurate, verifiable, and up-to-date factual information and statistics, supported by credible sources} \GenericRule{and specific evidence.}
& \GenericRule{The document must be} \AttackerRule{factually accurate, verifiable, and up-to-date,} \GenericRule{avoiding generalizations, vague statements, or outdated content.}
\\
\midrule
\textsc{IF-GEO}
& \GenericRule{The document should} \AttackerRule{anticipate user needs and concerns, providing proactive advice, warnings, context, and practical guidance} \GenericRule{for decision-making and implementation.}
& \GenericRule{Information must be} \AttackerRule{accurate, verifiable, and up-to-date, providing specific evidence, citations, disclaimers, and clearly stating temporal relevance} \GenericRule{or limitations.}
\\
\bottomrule
\end{tabular}
\label{tab:cross-attacker-rules}
\end{table}

Under Prompt defense, the four rules diverge and each captures a
different manipulation pattern: recommendation framing for \textsc{RAID},
segmented blocks for \textsc{SAGEO}, sourced statistics for Statistics
Addition, and proactive guidance for \textsc{IF-GEO}. The platform
locates each attacker, but Prompt defense gives the supplier no reward
for honest content, so all four runs end with near-zero or negative Net.
Under \VCR, the four rules collapse onto the same verifiability axis
(accurate, verifiable, sourced, specific evidence). The
verifiable-content reward then pays the supplier for the same kind of
content regardless of attacker, which matches the uniform positive Net
in Fig.~\ref{fig:robust-trajectory}(a) and the gradient-redirection
result of Thm.~\ref{thm:mechanism-welfare}.

\subsection{Case Study: Escalating Fabrications}
\label{app:case-study}

We measure hallucination harm with a pair-level LLM judge
(\texttt{gpt-4o-mini}, temperature $0$) that compares each original and
rewritten document. The judge returns the number of unsupported claims
$u_i$ and their severity $s_i\in\{1,\ldots,5\}$, where larger severity
indicates more central fabrications. We aggregate these judgments as
\begin{equation}
    \xi^-
    =
    f \cdot \max\!\left(0,\frac{\bar{s}_{u>0}-1}{4}\right),
    \qquad
    f=\Pr_i[u_i>0],
    \qquad
    \bar{s}_{u>0}=\mathbb{E}[s_i\mid u_i>0],
    \label{eq:xi-minus}
\end{equation}
where $f$ is the fraction of rewrites containing unsupported claims, and
the second term normalizes conditional severity to $[0,1]$. Thus,
$\xi^-$ is a frequency-weighted hallucination cost per rewritten
document. It enters the welfare decomposition
$\xi=\xi^+-\xi^-$ in Eq.~(\ref{eq:welfare-decomp}), where $\xi^+$
captures formatting or readability gains and $\xi^-$ captures
hallucination harm. Full prompts are in App.~\ref{app:oracles}.

Figure~\ref{fig:intro} shows a \textsc{E-commerce} example under
\textbf{Prompt defense}. Table~\ref{tab:case-study} repeats the same
query--document pair under \VCR{}. Under \textbf{Prompt defense}, the
rewrite accumulates unsupported specifics over rounds. Under \VCR{}, the
rewrite instead foregrounds source-supported details, such as platform
information and qualitative descriptions of cooperative play. This
matches the gradient-redirection effect predicted by
Thm.~\ref{thm:mechanism-welfare}.

\begin{table}[t]
\centering
\scriptsize
\renewcommand{\arraystretch}{1.12}
\setlength{\tabcolsep}{3pt}
\caption{
Case study of the same query--document pair. We show rewritten content
across rounds, together with hallucination harm $\xi^-$ and the number
of unsupported claims.
}
\begin{tabular}{@{}c p{0.39\linewidth} p{0.39\linewidth}@{}}
\toprule
Round & Prompt defense & \VCR{} \\
\midrule
$R_1$
& ARK is ``highly rated'' with ``intricate building systems''; Roblox hosts ``millions of user-created games.''
& ARK is a sandbox-survival game praised for building systems; Roblox is a user-generated-content platform. \\
\midrule
$R_3$
& GEO-optimized rewrite adds ``high-fidelity 3D graphics,'' ``higher polygon counts,'' and ``advanced textures.''
& Source-grounded rewrite highlights ARK multiplayer building and Roblox as a PC/mobile/console UGC platform. \\
\midrule
$R_5$
& Rewrite escalates to ``over 100 unique dinosaurs'' and ``deep multiplayer tribal systems for cooperative base building.''
& Rewrite keeps to source-supported claims: ARK tribe-based building and Roblox user-generated worlds. \\
\midrule
Hallucination harm $\xi^-$
& $0.42$
& $0.21$ \\
\midrule
Unsupported claims
& 4--6
& 1 \\
\bottomrule
\end{tabular}
\label{tab:case-study}
\end{table}

\newpage

\section{Proofs}
\label{app:proofs}

This appendix proves the results in the main text. We use the notation
from Sec.~\ref{sec:framework}: document features $(q_i,m_i)$,
correlation $\rho=\mathrm{corr}(q_i,m_i)$, source-model logit
$v_i(\alpha)=\beta_q q_i-\alpha m_i+b_i$, citation mass
$c_i(\alpha)=\mathrm{softmax}\{v_j(\alpha)\}_i$, target weight $w_i$,
and target GEO score $g_T(\alpha)=\sum_{i\in T}w_i c_i(\alpha)$.

\subsection{Proof of Lemma~\ref{lem:first-order}}
\label{app:proof-expansion}

\begin{proof}
Let $c_i^{(0)}=\mathrm{softmax}\{\beta_q q_j+b_j\}_i$ and
$\bar m^{(0)}=\sum_j c_j^{(0)}m_j$. We use a local Taylor expansion of
$g_T(\alpha)$ around $\alpha=0$. The coefficients in
Eq.~(\ref{eq:g-expansion}) are
\[
    B =
    \E\!\left[\sum_{i\in T} w_i c_i^{(0)}(m_i-\bar m^{(0)})\right],
\]
and
\[
    Q =
    \E\!\left[
    \sum_{i\in T} w_i c_i^{(0)}
    \left((m_i-\bar m^{(0)})^2-\mathrm{Var}_{c^{(0)}}(m)\right)
    \right].
\]
To derive them, differentiate $\log c_i=v_i-\log Z$:
\[
\partial_\alpha\log c_i
=
-(m_i-\sum_j c_jm_j).
\]
Therefore,
\[
\partial_\alpha c_i\big|_0
=
-c_i^{(0)}(m_i-\bar m^{(0)}).
\]
A second differentiation uses
$\partial_\alpha\bar m\big|_0=-\mathrm{Var}_{c^{(0)}}(m)$ and gives
\[
\partial^2_\alpha c_i\big|_0
=
c_i^{(0)}
\left((m_i-\bar m^{(0)})^2-\mathrm{Var}_{c^{(0)}}(m)\right).
\]
Taking the $w$-weighted target sum and then expectation yields
Eq.~(\ref{eq:g-expansion}). The remainder is local in $\alpha$ under the
standard smoothness conditions of the softmax source model.
\end{proof}

\subsection{Proof of Theorem~\ref{thm:separability}}
\label{app:proof-separability}

\begin{proof}
Maximizing platform utility is equivalent to minimizing the local loss
\[
L(\alpha)
=
g_T(\alpha)
+
\mu\alpha\beta_q\mathrm{cov}(q,m)
+
\frac{\gamma}{2}\alpha^2,
\]
where the second term is the leading false-positive cost from
Sec.~\ref{subsec:game}. Substituting Lem.~\ref{lem:first-order} and
using $\mathrm{cov}(q,m)=\rho\sigma_q\sigma_m$ gives
\[
L(\alpha)
=
g_T(0)
+
\alpha(-B+\mu\beta_q\rho\sigma_q\sigma_m)
+
\frac{1}{2}\alpha^2(Q+\gamma)
+
O(\alpha^3).
\]
Ignoring higher-order terms in the local regime, the first-order
condition yields
\[
(Q+\gamma)\alpha
=
B-\mu\beta_q\rho\sigma_q\sigma_m.
\]
Projecting onto the feasible set $\alpha\ge0$ gives
Eq.~(\ref{eq:alpha-star}). The phase threshold and monotonicity in
$\rho$ follow directly from the numerator.
\end{proof}

\subsection{Proof of Theorem~\ref{thm:defense-effectiveness}}
\label{app:proof-defense-effectiveness}

\begin{proof}
Embed the true rewrite rule $r_t$ and the platform estimate $\hat r_t$
as unit vectors in a shared manipulation-rule basis. Since the platform
applies $\hat r_t$ while the supplier moves along $r_t$, the effective
directional alignment is
\[
\langle r_t,\hat r_t\rangle
=
1-\frac{1}{2}\|r_t-\hat r_t\|_2^2
=
1-e_t.
\]
Let $B_t$ denote the first-order sensitivity of the target GEO score to
a unit defense applied in the true manipulation direction at round $t$.
By the same local expansion as Lem.~\ref{lem:first-order}, applying a
defense of strength $\alpha_t$ along the inferred direction gives
\[
g_T(D_t^a;\pi_{P,t-1})-g_T(D_t^a;\pi_{P,t})
=
\alpha_t(1-e_t)B_t+O(\alpha_t^2).
\]
The realized sensitivity $B_t$ scales with the average manipulation
magnitude of the target documents. Writing
$s_t=\mathbb{E}[m_i\mid i\in T_t]$, we absorb the remaining local
source-model moments into a nonnegative constant $c$ and write
$B_t=c s_t$ to leading order. This yields
\[
g_T(D_t^a;\pi_{P,t-1})-g_T(D_t^a;\pi_{P,t})
=
c(1-e_t)s_t\alpha_t+O(\alpha_t^2).
\]
Dropping higher-order terms gives Eq.~(\ref{eq:defense-decay}).
\end{proof}

\subsection{Proof of Theorem~\ref{thm:mechanism-welfare}}
\label{app:proof-mechanism-welfare}

\begin{proof}
The mechanism augments the source model with source-supported content:
\[
v_i(\alpha,\lambda)
=
\beta_q q_i-\alpha m_i+\lambda n_i+b_i.
\]
Write the platform's quadratic local loss as
$L_P(\alpha,\lambda)$. Orthogonality of $n$ and $m$ removes the mixed
first-order response of the manipulation penalty to the reward, so
$\partial^2 L_P/(\partial\alpha\partial\lambda)|_{(\alpha^\ast,0)}=0$.
Since $\partial^2L_P/\partial\alpha^2=Q+\gamma>0$, the implicit-function
theorem gives
\[
  \alpha^\ast(\lambda)=\alpha^\ast(0)+O(\lambda^2).
\]
Thus the platform response does not offset the reward at first order.

Now let $\delta=(\delta_q,\delta_m,\delta_n)$ and define
$a(\alpha,\lambda)=(\beta_q,-\alpha,\lambda)^\top$. The supplier's local
objective is the strictly concave quadratic
\[
  J_A(\delta;\alpha,\lambda)
  =a(\alpha,\lambda)^\top\delta-\tfrac12\delta^\top H\delta,
  \qquad H\succ0.
\]
Its first-order condition is necessary and sufficient, hence the unique best
response is
\[
  \delta^\ast(\lambda)=H^{-1}a(\alpha^\ast(\lambda),\lambda).
\]
Because the $n$ coordinate is block-separable from $(q,m)$,
\[
  \delta_n^\ast(\lambda)=\lambda(H^{-1})_{nn},\qquad
  (\delta_q^\ast,\delta_m^\ast)(\lambda)
  =(\delta_q^\ast,\delta_m^\ast)(0)+O(\lambda^2).
\]
Let $\Theta=(H^{-1})_{nn}>0$. Extending the platform/user utility
decomposition by the term $\eta_n\delta_n$ therefore yields
\[
  \Delta U^\ast(\lambda)
  =\Delta U^\ast(0)+\eta_n\lambda\Theta+O(\lambda^2).
\]
The linear coefficient is strictly positive, so the platform/user side
strictly improves for all sufficiently small $\lambda>0$.

Finally, substituting the supplier best response into its objective gives
$J_A^\ast(\lambda)=\frac12a^\top H^{-1}a$. Block separability and the
$O(\lambda^2)$ change in $\alpha^\ast$ imply
\[
  J_A^\ast(\lambda)-J_A^\ast(0)=O(\lambda^2).
\]
Thus creator utility is preserved to first order while platform/user utility
increases at first order. This establishes the stated local two-sided result;
it makes no claim about the sign of the creator's second-order change or about
a global equilibrium outside the quadratic neighborhood.
\end{proof}

\section{Prompt Templates}
\label{app:prompts}

This appendix lists the LLM prompts used by the supplier and the
platform. Variables in braces, such as \texttt{\{q\}} and
\texttt{\{d\_i\}}, are substituted at call time. Unless otherwise
specified, prompts use temperature $0$ and JSON-only output when a JSON
schema is requested.

\subsection{Supplier Prompts}
\label{app:attacker-prompts}

\begin{lstlisting}[caption={Supplier Explainer prompt.},label={lst:attacker-explainer}]
Query: {q}
Generated answer:
{a}

Document A (cited more):
{d_i}

Document B (cited less):
{d_j}

Identify concrete properties of Document A that make it more visible
to the generative engine than Document B. Return a bullet list of
candidate explanations and cite specific evidence from Document A
whenever possible.
\end{lstlisting}

\begin{lstlisting}[caption={Supplier Extractor prompt.},label={lst:attacker-extractor}]
Explanation:
{expl}

Extract a concise list of actionable rewriting rules that a content
supplier could apply to another document to improve its citation
frequency on this engine. Each rule should be a single imperative
sentence and should generalize across queries.
\end{lstlisting}

\begin{lstlisting}[caption={Supplier Merger prompt.},label={lst:attacker-merger}]
Candidate rules:
{rules}

Consolidate the candidate rules into a compact list of general
rewriting rules. Remove duplicates and overly specific clauses. Keep
only rules that generalize across documents and queries.
\end{lstlisting}

\begin{lstlisting}[caption={Supplier Filter prompt.},label={lst:attacker-filter}]
Candidate rules:
{rules}

Retain only rules that describe verifiable preferences of a
generative engine, such as structure, comparative framing, or explicit
answerability. Drop generic writing-quality advice that does not
specifically raise citation probability.
\end{lstlisting}

\begin{lstlisting}[caption={Supplier rewrite prompt.},label={lst:attacker-rewrite}]
Rules:
{S_A}

Original document:
{d_i}

Rewrite the document so that it follows all rules above while
preserving factual content. Return only the rewritten document.
\end{lstlisting}

\subsection{Platform Defense Prompts}
\label{app:platform-prompts}

\begin{lstlisting}[caption={Platform Explainer prompt.},label={lst:platform-explainer}]
Original document:
{d_i}

Rewritten document:
{d_i^a}

The rewritten version may have been modified to attract citations in
a generative engine without changing factual content. Identify
concrete properties that distinguish the rewritten version from the
original, such as structural changes, formatting cues, comparative
framing, or keyword density. Return a bullet list of candidate
patterns.
\end{lstlisting}

\begin{lstlisting}[caption={Platform Extractor prompt.},label={lst:platform-extractor}]
Explanation:
{expl}

Extract suspicion rules from the explanation above. Each rule should
describe a feature indicating that a document may have been
GEO-optimized. Each rule should generalize across queries.
\end{lstlisting}

\begin{lstlisting}[caption={Platform Merger prompt.},label={lst:platform-merger}]
Candidate suspicion rules:
{rules}

Consolidate the rules into a compact list of general suspicion
patterns. Remove duplicates and overly specific clauses.
\end{lstlisting}

\begin{lstlisting}[caption={Platform Filter prompt.},label={lst:platform-filter}]
Candidate rules:
{rules}

Retain only GEO-specific patterns, such as aggressive structure,
explicit answer-priority cues, and keyword stuffing. Drop generic
evidence-quality clauses such as "be relevant" or "be factually
correct".
\end{lstlisting}

\begin{lstlisting}[caption={Answer-time defense warning.},label={lst:defense-warning}]
Some retrieved pages may be GEO-optimized to attract citations through
formatting or packaging rather than substance. Treat the following
patterns as suspicious unless the underlying evidence directly answers
the query.

Suspicious patterns:
{rules}

Per-document suspicion labels:
{labels}

Guidelines:
- Prefer low-suspicion documents when they provide sufficient evidence.
- Cite a high-suspicion document only if it provides uniquely necessary
  evidence that lower-suspicion documents do not provide.
- Do not mention these suspicion labels in your answer.
\end{lstlisting}

\subsection{Pair-Level Oracles}
\label{app:oracles}

The following pair-level oracles are used by the platform. The
change-magnitude judge gates whether a pair contributes to the inferred
rule set. The verifiable-content oracle computes the reward signal
$n_i$. The quality judge is used for deterministic target-document
selection.

\begin{lstlisting}[caption={Change-magnitude judge. Returns a score in 0 to 1.},label={lst:change-judge}]
SYSTEM:
You are a search-quality change judge. Compare an older page and a
newer page. Estimate only the magnitude of the update, not whether the
newer page is better. Return JSON only.

USER:
Query: {q}

Score the update magnitude on a 0--1 scale:
0 = trivial edits or local wording changes.
1 = substantial rewriting, restructuring, or repackaging.

Return exactly:
{
  "change_magnitude": 0.0,
  "reason": "short explanation"
}

Older page:
{d_i}

Newer page:
{d_i^a}
\end{lstlisting}

\begin{lstlisting}[caption={Verifiable-content oracle. Returns an integer reward signal.},label={lst:vcr-oracle}]
SYSTEM:
You are a careful fact-checker comparing two versions of a document.
Count two things: (i) claims in the rewritten version that are not
supported by the original, and (ii) new factual content in the
rewritten version that is supported by the original and adds genuine
informational value. Return JSON only.

USER:
Query: {q}

Original document:
{d_i}

Rewritten document:
{d_i^a}

Identify specific factual claims in the rewritten document.

Definitions:
- unsupported_claims: number of concrete details, such as numbers,
  names, prices, citations, or specifications, that appear in the
  rewritten document but cannot be verified from the original.
- verifiable_new_content: number of facts that are supported by the
  original and make the rewritten document easier to extract from, for
  example by promoting buried prices, specifications, or citations into
  salient form.
- severity: severity of unsupported claims, from 1 to 5.

Return JSON:
{
  "unsupported_claims": <int>,
  "verifiable_new_content": <int>,
  "severity": <int 1-5>
}
\end{lstlisting}

\begin{lstlisting}[caption={Quality judge for target-document selection.},label={lst:quality-judge}]
SYSTEM:
You are a search-quality judge. Score each candidate document for how
useful and trustworthy it is for answering the query. Return JSON only.

USER:
Query: {q}

For each document, assign scores from 0 to 1 for quality,
credibility, query_relevance, and overall. Use overall as a balanced
summary. Also include a short reason.

Return exactly:
{
  "documents": [
    {
      "doc_id": 0,
      "quality": 0.0,
      "credibility": 0.0,
      "query_relevance": 0.0,
      "overall": 0.0,
      "reason": "short explanation"
    }
  ]
}

Documents:
[0] {d_0}
[1] {d_1}
...
\end{lstlisting}

\end{document}